\newtheorem{theorem}{Theorem}
\newtheorem{lemma}{Lemma}
\begin{document}

\title{Bayesian Best-Arm Identification for Selecting Influenza Mitigation Strategies}

\author[1,2]{Pieter Libin}
\author[1]{Timothy Verstraeten}
\author[1]{Diederik M. Roijers}
\author[1]{Jelena Grujic}
\author[2]{Kristof Theys}
\author[2]{Philippe Lemey}
\author[1]{Ann Now\'{e}}
\affil[1]{Artificial Intelligence Lab, Department of computer science, Vrije Universiteit Brussel, Brussels, Belgium}
\affil[2]{KU Leuven – University of Leuven, Department of Microbiology and Immunology, Rega Institute for Medical Research, Clinical and Epidemiological Virology, Leuven, Belgium}

\maketitle

 \begin{abstract}
Pandemic influenza has the epidemic potential to kill millions of people. While various preventive measures exist (i.a., vaccination and school closures), deciding on strategies that lead to their most effective and efficient use remains challenging. To this end, individual-based epidemiological models are essential to assist decision makers in determining the best strategy to curb epidemic spread. However, individual-based models are computationally intensive and it is therefore pivotal to identify the optimal strategy using a minimal amount of model evaluations. Additionally, as epidemiological modeling experiments need to be planned, a computational budget needs to be specified a priori. Consequently, we present a new sampling technique to optimize the evaluation of preventive strategies using fixed budget best-arm identification algorithms. We use epidemiological modeling theory to derive knowledge about the reward distribution which we exploit using Bayesian best-arm identification algorithms (i.e., Top-two Thompson sampling and BayesGap). We evaluate these algorithms in a realistic experimental setting and demonstrate that it is possible to identify the optimal strategy using only a limited number of model evaluations, i.e., 2-to-3 times faster compared to the uniform sampling method, the predominant technique used for epidemiological decision making in the literature. Finally, we contribute and evaluate a statistic for Top-two Thompson sampling to inform the decision makers about the confidence of an arm recommendation.

 \end{abstract}

 \section{Introduction}
The influenza virus is responsible for the deaths of half of a million people each year.
In addition,
seasonal influenza epidemics cause a significant economic burden. While transmission is primarily local, a newly emerging variant may spread to pandemic proportions in a fully susceptible host population \cite{Paules2017}. Pandemic influenza occurs less frequently than seasonal influenza but the outcome with respect to morbidity and mortality can be much more severe, potentially killing millions of people worldwide \cite{Paules2017}. Therefore, it is essential to study mitigation strategies to control influenza pandemics.

For influenza, different preventive measures exist: i.a., vaccination, social measures (e.g., school closures and travel restrictions) and antiviral drugs. However, the efficiency of strategies greatly depends on the availability of preventive compounds, as well as on the characteristics of the targeted epidemic. Furthermore, governments typically have limited resources to implement such measures. Therefore, it remains challenging to formulate public health strategies that make effective and efficient use of these preventive measures
within the existing resource constraints.

Epidemiological models (i.e., compartment models and individual-based models) are essential to study the effects of preventive measures \textit{in silico} \cite{Basta2009,Germann2006}.
While individual-based models are usually associated with a greater model complexity and computational cost than compartment models, they allow for a more accurate evaluation of preventive strategies \cite{Eubank2006}.
To capitalize on these advantages and make it feasible to employ individual-based models, it is essential to use the available computational resources as efficiently as possible.

In the literature, a set of possible preventive strategies is typically evaluated by simulating each of the strategies an equal number of times \cite{Fumanelli2016,ferguson2005strategies,Chao2012}.
However, this approach is inefficient to identify the optimal preventive strategy, as a large proportion of computational resources will be used to explore sub-optimal strategies.
Furthermore, a consensus on the required number of model evaluations per strategy is currently lacking \cite{Willem2014} and we show that this number depends on the \emph{hardness} of the evaluation problem.
Additionally, we recognize that epidemiological modeling experiments need to be planned and that a computational budget needs to be specified a priori.
Therefore, we present a novel approach where we formulate the evaluation of preventive strategies as a \emph{best-arm identification} problem using a \emph{fixed budget} of model evaluations.

As running an individual-based model is computationally intensive (i.e., minutes to hours, depending on the complexity of the model), minimizing the number of required model evaluations reduces the total time required to evaluate a given set of preventive strategies. This renders the use of individual-based models attainable in studies where it would otherwise not be computationally feasible. Additionally, reducing the number of model evaluations will free up computational resources in studies that already use individual-based models, capacitating researchers to explore a larget set of model scenarios. This is important, as considering a wider range of scenarios increases the confidence about the overall utility of preventive strategies \cite{wu2006reducing}.

In this paper, we contribute a novel technique to evaluate preventive strategies as a fixed budget best-arm identification problem.
We employ epidemiological modeling theory to derive assumptions about the reward distribution and exploit this knowledge using Bayesian algorithms.
This new technique enables decision makers to obtain recommendations in a reduced number of model evaluations.
We evaluate the technique in an experimental setting, where we aim to find the best vaccine allocation strategy in a realistic simulation environment that models an influenza pandemic on a large social network.
Finally, we contribute and evaluate a statistic to inform the decision makers about the confidence of a particular recommendation.

\section{Background}
\label{sec:background}

\subsection{Pandemic influenza and vaccine production}
\label{subsec:bg:pandemicinfluenza}
The primary preventive strategy to mitigate seasonal influenza is to produce vaccine prior to the epidemic, anticipating the virus strains that are expected to circulate.
This vaccine pool is used to inoculate the population before the start of the epidemic. 

While it is possible to stockpile vaccines to prepare for seasonal influenza, this is not the case for influenza pandemics, as the vaccine should be specifically tailored to the virus that is the source of the pandemic.
Therefore, before an appropriate vaccine can be produced, the responsible virus needs to be identified.
Hence, vaccines will be available only in limited supply at the beginning of the pandemic \cite{who2004}.
In addition, production problems can result in vaccine shortages \cite{enserink2004}.
When the number of vaccine doses is limited, it is imperative to identify an optimal vaccine allocation strategy \cite{Medlock2009}.

\subsection{Modeling influenza}
There is a long tradition of using individual-based models to study influenza epidemics \cite{Basta2009,Germann2006,Fumanelli2016}, as they allow for a more accurate evaluation of preventive strategies. A state-of-the-art individual-based model that has been the driver for many high impact research efforts \cite{Basta2009,Germann2006,Halloran2002}, is FluTE \cite{chao2010}.

FluTE implements a contact model where the population is divided into communities of households \cite{chao2010}. The population is organized in a hierarchy of social mixing groups where the contact intensity is inversely proportional with the size of the group (e.g., closer contact between members of a household than between colleagues). Additionally, FluTE implements an individual disease progression model that associates different disease stages with different levels of infectiousness. FluTE supports the evaluation of preventive strategies through the simulation of therapeutic interventions (i.e., vaccines, antiviral compounds) and non-therapeutic interventions (i.e., school closure, case isolation, household quarantine).

\subsection{Bandits and best-arm identification}
\label{subsec:bg:bestarm}
The \emph{multi-armed bandit game} \cite{audibert2010best} involves a $K$-armed bandit (i.e., a slot machine with $K$ levers), where each arm $A_k$ returns a reward $r_k$ when it is pulled (i.e., $r_k$ represents a sample from $A_k$'s reward distribution).
A common use of the bandit game is to pull a sequence of arms such that the cumulative regret is minimized \cite{Herbert1952}. To fulfill this goal, the player needs to carefully balance between exploitation and exploration.

In this paper, the objective is to recommend the best arm $A^*$ (i.e., the arm with the highest average reward $\mu^*$), after a fixed number of arm pulls. This is referred to as the fixed budget best-arm identification problem \cite{audibert2010best}, an instance of the pure-exploration problem \cite{bubeck2009pure}. For a given budget $T$, the objective is to minimize the \emph{simple regret} $\mu^* - \mu_J$, where $\mu_J$ is the average reward of the recommended arm $A_J$, at time T \cite{bubeck2011pure}. Simple regret is inversely proportional to the probability of recommending the correct arm $A^*$ \cite{kaufmann2016complexity}.

\section{Related work}
As we established that a computational budget needs to be specified a priori, our problem setting matches the fixed budget best-arm identification setting. This differs from settings that attempt to identify the best arm with a predefined confidence: i.e., racing strategies \cite{even2006action}, strategies that exploit the confidence bound of the arms' means \cite{kaufmann2013information} and more recently fixed confidence best-arm identification algorithms \cite{garivier2016optimal}.

We selected Bayesian fixed budget best-arm identification algorithms, as we aim to incorporate prior knowledge about the arms' reward distributions and use the arms' posteriors to define a statistic to support policy makers with their decisions.
We refer to \cite{kaufmann2016complexity,hoffman2014correlation}, for a broader overview of the state of the art with respect to (Bayesian) best-arm identification algorithms.

Best-arm identification algorithms have been used in a large set of application domains: i.a., evaluation of response surfaces, the initialization of hyper-parameters and traffic congestion.

While other algorithms exist to rank or select bandit arms, e.g. \cite{powell2012optimal}, best-arm identification is best approached using adaptive sampling methods \cite{jennison1982asymptotically}, as the ones we study in this paper.

In preliminary work, we explored the potential of multi-armed bandits to evaluate prevention strategies in a regret minimization setting, using default strategies (i.e., $\epsilon$-greedy and UCB1). We presented this work at the the 'Adaptive Learning Agents' workshop hosted by the AAMAS conference \cite{Libin2017}. This setting is however inadequate to evaluate prevention strategies \emph{in silico}, as minimizing cumulative regret is sub-optimal to identify the best arm. Additionally, in this workshop paper, the experiments considered a small and less realistic population, and only analyzed a limited range of $R_0$ values that is not representative for influenza pandemics.

\section{Methods}
\label{sec:methods}
We formulate the evaluation of preventive strategies as a multi-armed bandit game with the aim of identifying the best arm using a fixed budget of model evaluations. The presented method is generic with respect to the type of epidemic that is modeled (i.e., pathogen, contact network, preventive strategies). The method is evaluated in the context of pandemic influenza in the next section.

\subsection{Evaluating preventive strategies with bandits}
 A \emph{stochastic epidemiological model} $E$ is defined in terms of a model configuration $c \in \mathcal{C}$ and can be used to evaluate a preventive strategy $p \in \mathcal{P}$.
The result of a model evaluation is referred to as the \emph{model outcome} (e.g., prevalence, proportion of symptomatic individuals, morbidity, mortality, societal cost).
Evaluating the model $E$ thus results in a sample of the model's \emph{outcome distribution}:
  \begin{equation}
   \text{outcome} \sim E(c, p) \text{, where } c \in \mathcal{C} \text{ and } p \in \mathcal{P}
  \end{equation}

Our objective is to find the optimal preventive strategy (i.e., the strategy that minimizes the expected outcome) from a set of alternative strategies $\{p_1,...,p_K\} \subset \mathcal{P}$ for a particular configuration $c_0 \in \mathcal{C}$ of a stochastic epidemiological model, where $c_0$ corresponds to the studied epidemic.
To this end, we consider a  multi-armed bandit with $K=|\{p_1,...,p_{K}\}|$ arms. Pulling arm $p_k$ corresponds to evaluating $p_k$ by running a simulation in the epidemiological model $E(c_0, p_k)$. The bandit thus has preventive strategies as arms with reward distributions corresponding to the outcome distribution of a stochastic epidemiological model $E(c_0, p_k)$.
While the parameters of the reward distribution are known (i.e., the parameters of the epidemiological model), it is intractable to determine the optimal reward analytically. Hence, we must learn about the outcome distribution via interaction with the epidemiological model.
In this work, we consider prevention strategies of equal financial cost, which is a realistic assumption, as governments typically operate within budget constraints. 

\subsection{Outcome distribution}
\label{subsec:methods:outcome_distr}
As previously defined, the reward distribution associated with a bandit's arm corresponds to the outcome distribution of the epidemiological model that is evaluated when pulling that arm.
Therefore, employing insights from epidemiological modeling theory allows us to specify prior knowledge about the reward distribution.

It is well known that a disease outbreak has two possible outcomes: either it is able to spread beyond a local context and becomes a fully established epidemic or it fades out \cite{Watts2005}.
Most stochastic epidemiological models reflect this reality and hence its epidemic size distribution is bimodal \cite{Watts2005}.
When evaluating preventive strategies, the objective is to determine the preventive strategy that is most suitable to mitigate an established epidemic.
As in practice we can only observe and act on established epidemics, epidemics that faded out in simulation would bias this evaluation.
Consequently, it is necessary to focus on the mode of the distribution that is associated with the established epidemic.
Therefore we censor (i.e., discard) the epidemic sizes that correspond to the faded epidemic.
The size distribution that remains (i.e., the one that corresponds with the established epidemic) is approximately Gaussian \cite{Britton2010}.

In this study, we consider a scaled epidemic size distribution, i.e., the proportion of symptomatic infections.
Hence we can assume bimodality of the full size distribution and an approximately Gaussian size distribution of the established epidemic.
We verified experimentally that these assumptions hold for all the reward distributions that we observed in our experiments (see Section~\ref{sec:experiments}).

To censor the size distribution, we use a threshold that represents the number of infectious individuals that are required to ensure an outbreak will only fade out with a low probability.

\subsection{Epidemic fade-out threshold}
\label{subsec:methods:threshold}
For heterogeneous host populations (i.e., a population with a significant variance among individual transmission rates, as is the case for influenza epidemics \cite{dorigatti2012,fraser2011}), the number of secondary infections can be accurately modeled using a negative binomial \emph{offspring distribution} $\text{NB}(R_0,\gamma)$ \cite{Lloyd2005}, where $R_0$ is the basic reproductive number (i.e., the number of infections that is, by average, generated by one single infection) and $\gamma$ is a dispersion parameter that specifies the extent of heterogeneity.
The probability of epidemic extinction $p_{\text{ext}}$ can be computed by solving $g(s)=s$, where $g(s)$ is the probability generating function (pgf) of the offspring distribution \cite{Lloyd2005}.
For an epidemic where individuals are targeted with preventive measures (i.e., vaccination in our case), we obtain the following pgf
\begin{equation}
  g(s)=pop_c+(1-pop_c)\big(1+\frac{R_0}{\gamma}(1-s) \big)^{-\gamma}
\end{equation}
where $pop_c$ signifies the random proportion of controlled individuals \cite{Lloyd2005}.
From $p_{\text{ext}}$ we can compute a threshold $T_0$ to limit the probability of extinction to a cutoff $\ell$ \cite{Hartfield2013}.

\subsection{Best-arm identification with a fixed budget}
\label{subsec:methods:bestarmid}
Our objective is to identify the best preventive strategy (i.e., the strategy that minimizes the expected outcome) out of a set of preventive strategies, for a particular configuration $c_0 \in \mathcal{C}$ using a fixed budget $T$ of model evaluations.
To find the best prevention strategy, it suffices to focus on the mean of the outcome distribution, as it is approximately Gaussian with an unknown yet small variance \cite{Britton2010}, as we confirm in our experiments (see Figure~\ref{fig:violin}).

Successive Rejects was the first algorithm to solve the best-arm identification in a fixed budget setting \cite{audibert2010best}. For a $K$-armed bandit, Successive Rejects operates in $(K-1)$ phases. At the end of each phase, the arm with the lowest average reward is discarded. Thus, at the end of phase $(K-1)$ only one arm survives, and this arm is recommended.

Successive Rejects serves as a useful baseline, however, it has no support to incorporate any prior knowledge.
Bayesian best-arm identification algorithms on the other hand, are able to take into account such knowledge by defining an appropriate prior and posterior on the arms' reward distribution.
As we will show, such prior knowledge can increase the best-arm identification accuracy. Additionally, at the time an arm is recommended, the posteriors contain valuable information that can be used to formulate a variety of statistics helpful to assist decision makers.
We consider two state-of-the-art Bayesian algorithms: BayesGap \cite{hoffman2014correlation} and Top-two Thompson sampling \cite{russo2016simple}.
For Top-two Thompson sampling, we derive a statistic based on the posteriors to inform the decision makers about the confidence of an arm recommendation: the probability of success.

As we established in the previous section, each arm of our bandit has a reward distribution that is approximately Gaussian with unknown mean and variance. For the purpose of genericity, we assume an uninformative Jeffreys prior $(\sigma_k)^{-3}$ on $(\mu_k, \sigma^2_k)$, which leads to the following posterior on $\mu_k$ at the $n_k^{\text{th}}$ pull \cite{Honda2014}:
\begin{equation}
  \label{eq:posterior}
\sqrt{\frac{n_k^2}{S_{k,n_k}}}(\mu_k - \overline{x}_{k,n_k}) \hspace{1mm} | \hspace{1mm} \overline{x}_{k,n_k},S_{k,n_k} \sim \mathcal{T}_{n_k}
\end{equation}
where $\overline{x}_{k,n_k}$ is the reward mean, $S_{k,n_k}$ is the sum of squares\begin{equation}
S_{k,n_k} = \sum_{m=1}^{n_k}(r_{k,m} - \overline{x}_{k,n_k})^2
\end{equation}
and $\mathcal{T}_{n_k}$ is the standard student t-distribution with $n_k$ degrees of freedom.

BayesGap is a gap-based Bayesian algorithm \cite{hoffman2014correlation}.
The algorithm requires that for each arm $A_k$, a high-probability upper bound $U_k(t)$ and lower bound $L_k(t)$ is defined on the posterior of $\mu_k$ at each time step $t$.
Using these bounds, the gap quantity
\begin{equation}
  B_k(t)=\max_{l \neq k}U_{l}(t) - L_k(t)
\end{equation}
is defined for each arm $A_k$. $B_k(t)$ represents an upper bound on the simple regret (as defined in Section~\ref{subsec:bg:bestarm}).
At each step $t$ of the algorithm, the arm $J(t)$ that minimizes the gap quantity $B_k(t)$ is compared to the arm $j(t)$ that maximizes the upper bound $U_k(t)$.
From $J(t)$ and $j(t)$, the arm with the highest confidence diameter $U_k(t)-L_k(t)$ is pulled.
The reward that results from this pull is observed and used to update $A_k$'s posterior.
When the budget is consumed, the arm
\begin{equation}
  J(\operatornamewithlimits{argmin}\limits_{t \leq T} B_{J(t)}(t))
\end{equation}
is recommended. This is the arm that minimizes the simple regret bound over all times $t \leq T$.

In order to use BayesGap to evaluate preventive strategies, we contribute problem-specific bounds. Given our posteriors (Equation ~\ref{eq:posterior}), we define
\begin{equation}
\begin{split}
  U_k(t)&=\hat{\mu}_k(t) + \beta \hat{\sigma}_k(t)\\
  L_k(t)&=\hat{\mu}_k(t) - \beta \hat{\sigma}_k(t)
\end{split}
\end{equation}
where $\hat{\mu}_k(t)$ and $\hat{\sigma}_k(t)$ are the respective mean and standard deviation of the posterior of arm $A_k$ at time step $t$, and $\beta$ is the exploration coefficient.

The amount of exploration that is feasible given a particular bandit game, is proportional to the available budget, and inversely proportional to the game's complexity \cite{hoffman2014correlation}.
This complexity can be modeled taking into account the game's hardness \cite{audibert2010best} and the variance of the rewards.
We use the hardness quantity defined in \cite{hoffman2014correlation}:
\begin{equation}
  H_{\epsilon} = \sum_k{H_{k,\epsilon}^{-2}}
\end{equation}
with arm-dependent hardness
\begin{equation}
  H_{k,\epsilon} = \max(\frac{1}{2}(\Delta_k + \epsilon), \epsilon)
  \text{, where }
  \Delta_k = \max_{l \neq k}(\mu_l) - \mu_k
\end{equation}

Considering the budget $T$, hardness $H_{\epsilon}$ and a generalized reward variance $\sigma_G^2$ over all arms, we define
\begin{equation}
  \beta=\sqrt{\frac{T - 3K}{4 H_{\epsilon} \sigma_G^2}}
\end{equation}
Theorem 1 in the Supplementary Information (Section 2) formally proves that using these bounds results in a probability of simple regret that asymptotically reaches the exponential lower bound of \cite{hoffman2014correlation}.

As both $H_{\epsilon}$ and $\sigma_G^2$ are unknown, in order to compute $\beta$, these quantities need to be estimated.
Firstly, we estimate $H_{\epsilon}$'s upper bound $\hat{H}_{\epsilon}$ by estimating $\Delta_k$ as follows
\begin{equation}
  \hat{\Delta}_k = \max_{1 \leq l < K; l \neq k}{(\hat{\mu}_l(t) + 3\hat{\sigma}_l(t)\big)} - \big(\hat{\mu}_k(t) - 3\hat{\sigma}_k(t))
\end{equation}
as in \cite{hoffman2014correlation},
where $\hat{\mu}_k(t)$ and $\hat{\sigma}_k(t)$ are the respective mean and standard deviation of the posterior of arm $A_k$ at time step $t$.
Secondly, for $\sigma_G^2$ we need a measure of variance that is representative for the reward distribution of all arms.
To this end, when the arms are initialized, we observe their sample variance $s_k^2$, and compute their average $\bar{s}_G^2$.

As our bounds depend on the standard deviation $\hat{\sigma}_k(t)$ of the t-distributed posterior, each arm's posterior needs to be initialized 3 times (i.e., by pulling the arm) to ensure that $\hat{\sigma}_k(t)$ is defined, this initialization also ensures proper posteriors \cite{Honda2014}.

Top-two Thompson sampling is a reformulation of the Thompson sampling algorithm, such that it can be used in a pure-exploration context \cite{russo2016simple}.
Thompson sampling operates directly on the arms' posterior of the reward distribution's mean $\mu_k$.
At each time step, Thompson sampling obtains one sample for each arm's posterior.
The arm with the highest sample is pulled, and its reward is subsequently used to update that arm's posterior.
While this approach has been proven highly successful to minimize cumulative regret \cite{Chapelle2011,Honda2014}, as it balances the exploration-exploitation trade-off, it is sub-optimal to identify the best arm \cite{bubeck2009pure}.
To adapt Thompson sampling to minimize simple regret, Top-two Thompson sampling  increases the amount of exploration.
To this end, an exploration probability $\omega$ needs to be specified.
At each time step, one sample is obtained for each arm's posterior.
The arm $A_{\text{top}}$ with the highest sample is only pulled with probability $\omega$.
With probability $1-\omega$ we repeat sampling from the posteriors until we find an arm $A_{\text{top-2}}$ that has the highest posterior sample and where $A_{\text{top}} \neq A_{\text{top-2}}$.
When the arm $A_{\text{top-2}}$ is found, it is pulled and the observed reward is used to update the posterior of the pulled arm.
When the available budget is consumed, the arm with the highest average reward is recommended.

As Top-two Thompson sampling only requires samples from the arms' posteriors, we can use the t-distributed posteriors from Equation~\ref{eq:posterior} as is. To avoid improper posteriors, each arm needs to be initialized 2 times \cite{Honda2014}.

As specified in the previous subsection, the reward distribution is censored. We observe each reward, but only consider it to update the arm's value
when it exceeds the threshold $T_0$ (i.e., when we receive a sample from the mode of the epidemic that represents the established epidemic).

\subsection{Probability of success}
The probability that an arm recommendation is correct presents a useful confidence statistic to support policy makers with their decisions.
As Top-two Thompson sampling recommends the arm with the highest average reward, and we assume that the arm's reward distributions are independent, the probability of success is
\begin{equation}
  P(\mu_{_{J}} = \max_{1 \leq k \leq K}{\mu_{_k}}) = \int_{x \in \mathbb{R}} \big[\prod_{k \neq J}^{K}F_{\mu_k}(x)\big]f_{\mu_J}(x)dx
\end{equation}
where $\mu_{_{J}}$ is the random variable that represents the mean of the recommended arm's reward distribution, $f_{\mu_J}$ is the recommended arm's posterior probability density function and $F_{\mu_k}$ is the other arms' cumulative density function. As this integral cannot be computed analytically, we estimate it using Gaussian quadrature.

It is important to note that, while aiming for generality, we made some conservative assumptions: the reward distributions are approximated as Gaussian and the uninformative Jeffreys prior is used. These assumptions imply that the derived probability of success will be an under-estimator for the actual recommendation success, which is confirmed in our experiments.

\section{Experiments}
\label{sec:experiments}
We composed and performed an experiment in the context of pandemic influenza, where we analyze the mitigation strategy to vaccinate a population when only a limited number of vaccine doses is available (details about the rationale behind this scenario in Section~\ref{subsec:bg:pandemicinfluenza}).
In our experiment, we accommodate a realistic setting to evaluate vaccine allocation, where we consider a large and realistic social network and a wide range of $R_0$ values.

We consider the scenario when a pandemic is emerging in a particular geographical region and vaccines becomes available, albeit in a limited number of doses. 
When the number of vaccine doses is limited, it is imperative to identify an optimal vaccine allocation strategy \cite{Medlock2009}.
In our experiment, we explore the allocation of vaccines over five different age groups, that can be easily approached by health policy officials: pre-school children, school-age children, young adults, older adults and the elderly, as proposed in \cite{chao2010}.

\subsection{Influenza model and configuration}
The epidemiological model used in the experiments is the FluTE stochastic individual-based model.
In our experiment we consider the population of Seattle (United States) that includes 560,000 individuals \cite{chao2010}.
This population is realistic both with respect to the number of individuals and its community structure, and provides an adequate setting for the validation of vaccine strategies \cite{Willem2014}.

At the first day of the simulated epidemic, 10 random individuals are seeded with an infection.
The epidemic is simulated for 180 days, during which time no more infections are seeded.
Thus, all new infections established during the run time of the simulation, result from the mixing between infectious and susceptible individuals.
We assume no pre-existing immunity towards the circulating virus variant.
We choose the number of vaccine doses to allocate to be approximately 4.5\% of the population size \cite{Medlock2009}.

We perform our experiment for a set of $R_0$ values within the range of $1.4$ to $2.4$, in steps of 0.2. This range is considered representative for the epidemic potential of influenza pandemics \cite{Basta2009,Medlock2009}. We refer to this set of $R_0$ values as $\mathcal{R}_0$.

Note that the setting described in this subsection, in conjunction with a particular $R_0$ value, corresponds to a model configuration (i.e., $c_0 \in \mathcal{C}$).

The computational complexity of FluTE simulations depends both on the size of the susceptible population and the proportion of the population that becomes infected.
For the population of Seattle, the simulation run time was up to $11\frac{1}{2}$ minutes (median of $10\frac{1}{2}$ minutes, standard deviation of 6 seconds), on state-of-the-art hardware (details in Supplementary Information, Section 7).

\subsection{Formulating vaccine allocation strategies}
We consider 5 age groups to which vaccine doses can be allocated: pre-school children (i.e., 0-4 years old), school-age children (i.e., 5-18 years old), young adults (i.e., 19-29 years old), older adults (i.e., 30-64 years old) and the elderly (i.e., $>$ 65 years old) \cite{chao2010}.
An allocation scheme can be encoded as a Boolean 5-tuple, where each position in the tuple corresponds to the respective age group.
The Boolean value at a particular position in the tuple denotes whether vaccines should be allocated to the respective age group.
When vaccines are to be allocated to a particular age group, this is done proportional to the size of the population that is part of this age group \cite{Medlock2009}.
To decide on the best vaccine allocation strategy, we enumerate all possible combinations of this tuple.

\subsection{Outcome distributions}
  To establish a proxy for the ground truth concerning the outcome distributions of the 32 considered preventive strategies, all strategies were evaluated $1000$ times, for each of the $R_0$ values in $\mathcal{R}_0$. We will use this ground truth as a reference to validate the correctness of the recommendations obtained throughout our experiments.

$\mathcal{R}_0$ presents us with an interesting evaluation problem. To demonstrate this, we visualize the outcome distribution for $R_0=1.4$ and for $R_0=2.4$ in Figure~\ref{fig:violin} (the outcome distributions for the other $R_0$ values are shown in Section 3 of the Supplementary Information).
Firstly, we observe that for different values of $R_0$, the distances between top arms' means differ.
Additionally, outcome distribution variances vary over the set of $R_0$ values in $\mathcal{R}_0$.
These differences produce distinct levels of evaluation hardness (see Section~\ref{subsec:methods:bestarmid}), and demonstrate the setting's usefulness as benchmark to evaluate preventive strategies.
While we discuss the hardness of the experimental settings under consideration, it is important to state that our best-arm identification framework requires no prior knowledge on the problem's hardness.
Secondly, we expect the outcome distribution to be bimodal. However, the probability to sample from the mode of the outcome distribution that represents the non-established epidemic decreases as $R_0$ increases \cite{Lloyd2005}.
This expectation is confirmed when we inspect Figure~\ref{fig:violin}, the left panel shows a bimodal distribution for $R_0=1.4$, while the right panel shows a unimodal outcome distribution for $R_0=2.4$, as only samples from the established epidemic were obtained.

\begin{figure}
  \centering
  \begin{minipage}[t]{0.45\textwidth}
    \includegraphics[width=\linewidth]{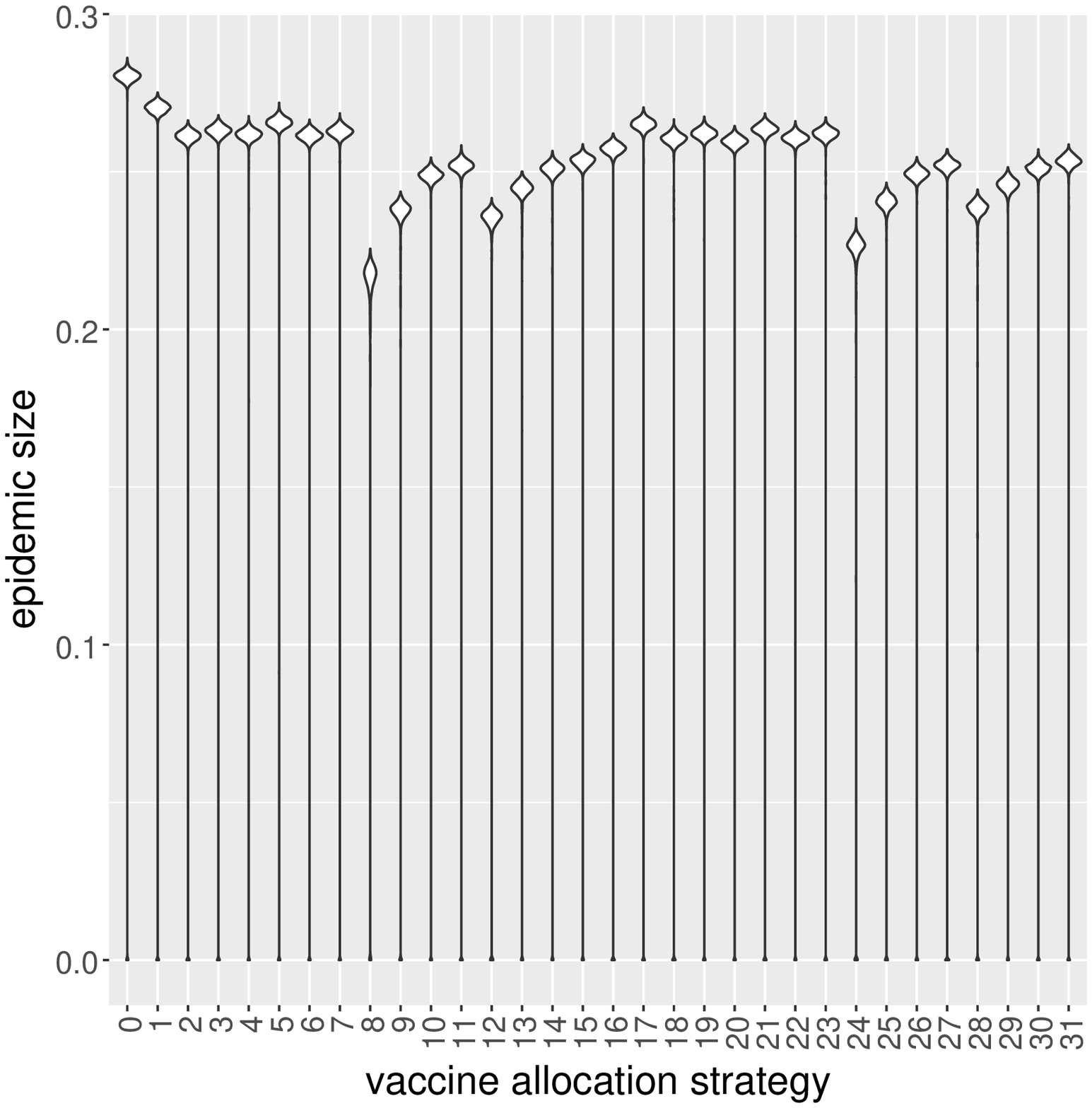}
  \end{minipage}
  \begin{minipage}[t]{0.45\textwidth}
    \includegraphics[width=\linewidth]{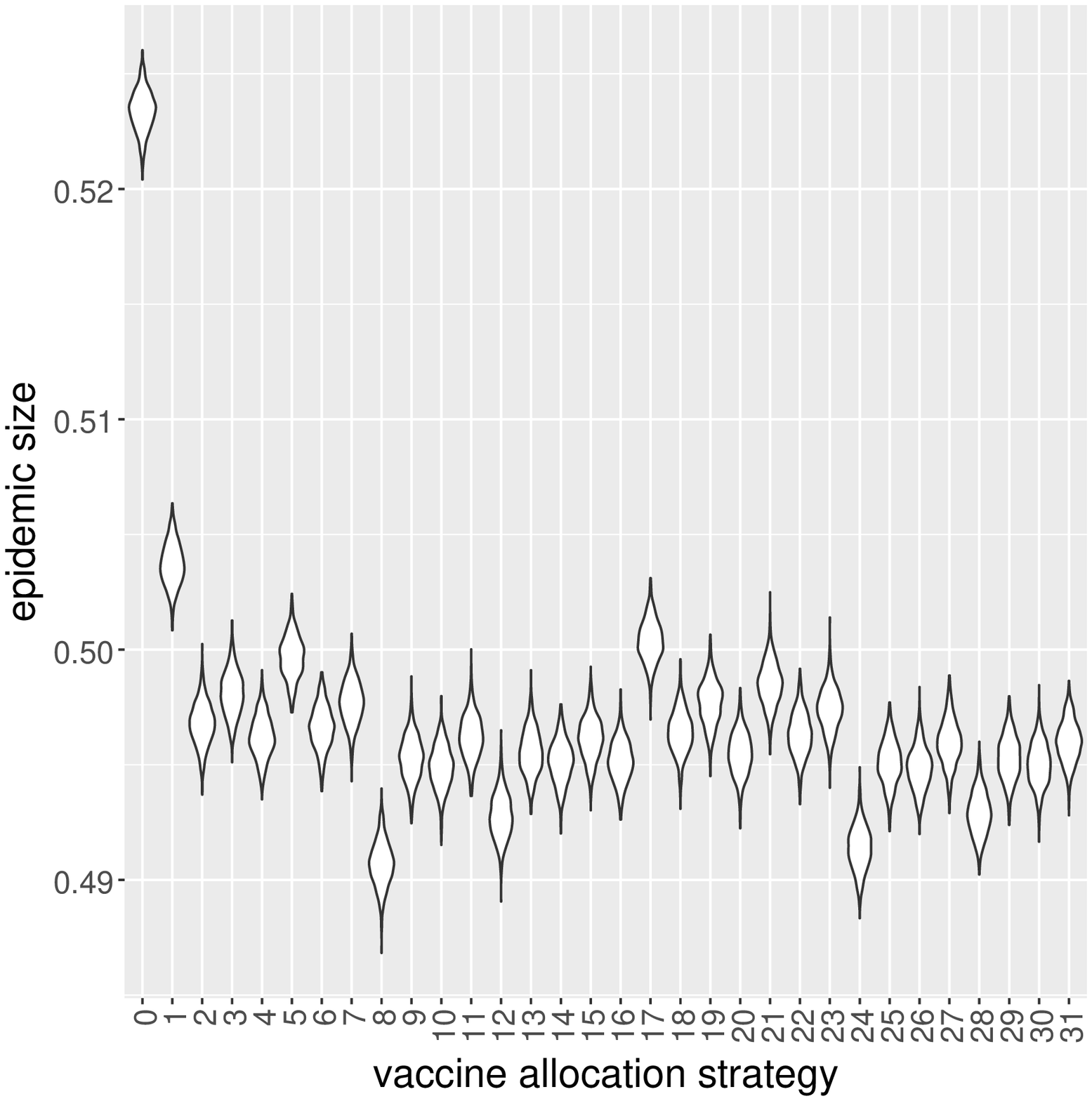}
  \end{minipage}
\caption{Violin plot that depicts the density of the outcome distribution (i.e., epidemic size) for 32 vaccine allocation strategies (left panel $R_o=1.4$, right panel $R_o=2.4$).
}
\label{fig:violin}
\end{figure}

Our analysis identified that the best vaccine allocation strategy was $\langle0,1,0,0,0\rangle$ (i.e., allocate vaccine to school children, strategy 8) for all $R_0$ values in $\mathcal{R}_0$.

\subsection{Best-arm identification experiment}
To assess the performance of the different best-arm identification algorithms (i.e., Successive Rejects, BayesGap and Top-two Thompson sampling) we run each algorithm for all budgets in the range of 32 to 500.
This evaluation is performed on the influenza bandit game that we defined earlier.
For each budget, we run the algorithms 100 times, and report the recommendation success rate.
In the previous section, the optimal vaccine allocation strategy was identified to be $\langle0,1,0,0,0\rangle$ (i.e., vaccine allocation strategy 8) for all $R_0$ in $\mathcal{R}_0$.
We thus consider a recommendation to be correct when it equals this vaccine allocation strategy.

We evaluate the algorithm's performance with respect to each other and with respect to uniform sampling, the current state-of-the art to evaluate preventive strategies.
The uniform sampling method pulls arm $A_u$ for each step $t$ of the given budget $T$, where $A_u$'s index $u$ is sampled from the uniform distribution $\mathcal{U}(1,K)$.
To consider different levels of hardness, we perform this analysis for each $R_0$ value in $\mathcal{R}_0$.

For the Bayesian best-arm identification algorithms, the prior specifications are detailed in Section~\ref{subsec:methods:bestarmid}.
BayesGap requires an upper and lower bound that is defined in terms of the used posteriors.
In our experiments, we use upper bound $U_k(t)$ and lower bound $L_k(t)$ that were established in Section~\ref{subsec:methods:bestarmid}.
Top-two Thompson sampling requires a parameter that modulates the amount of exploration $\omega$. As it is important for best-arm identification algorithms to differentiate between the top two arms, we choose $\omega=0.5$, such that, in the limit, Top-two Thompson sampling will explore the top two arms uniformly.

We censor the reward distribution based on the threshold $T_0$ we defined in Section~\ref{subsec:methods:threshold}.
This threshold depends on basic reproductive number $R_0$ and dispersion parameter $\gamma$.
$R_0$ is defined explicitly for each of our experimental settings. For the dispersion parameter we choose $\gamma=0.5$, which is a conservative choice according to the literature \cite{dorigatti2012,fraser2011}.
We define the probability cutoff $\ell=10^{-10}$.

Figure~\ref{fig:bandit_run} shows recommendation success rate for each of the best-arm identification algorithms for $R_0=1.4$ (left panel) and $R_0=2.4$ (right panel).
The results for the other $R_0$ values are visualized in Section 4 of the Supplementary Information.
The results for different values of $R_0$ clearly indicate that our selection of best-arm identification algorithms significantly outperforms the uniform sampling method.
Overall, the uniform sampling method requires more than double the amount of evaluations to achieve a similar recommendation performance. For the harder problems (e.g., setting with $R_0=2.4$), recommendation uncertainty remains considerable even after consuming 3 times the budget required by Top-two Thompson sampling.

All best-arm identification algorithms require an initialization phase in order to output a well-defined recommendation.
Successive Rejects needs to pull each arm at least once, while Top-two Thompson sampling and BayesGap need to pull each arm respectively 2 and 3 times (details in Section~\ref{subsec:methods:bestarmid}).
For this reason, these algorithms' performance can only be evaluated after this initialization phase.
BayesGap's performance is on par with Successive Rejects, except for the hardest setting we studied (i.e., $R_0=2.4$).
In comparison, Top-two Thompson sampling consistently outperforms Successive Rejects 30 pulls after the initialization phase.

Top-two Thompson sampling needs to initialize each arm's posterior with 2 pulls, i.e., double the amount of uniform sampling and Successive Rejects. However, our experiments clearly show that none of the other algorithms reach any acceptable recommendation rate using less than 64 pulls.

\begin{figure}
  \centering
  \begin{minipage}[t]{0.45\textwidth}
    \includegraphics[width=\linewidth]{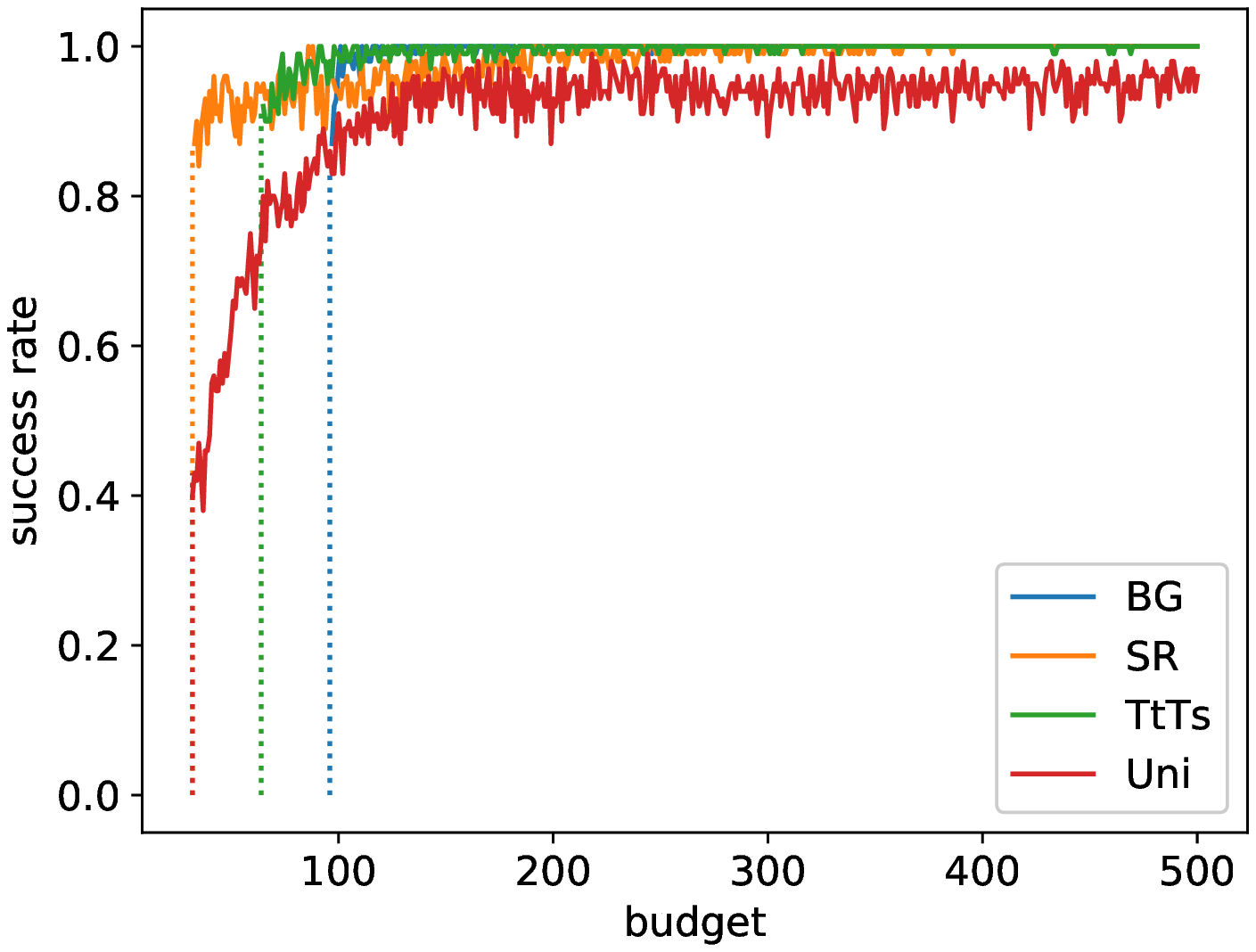}
  \end{minipage}
  \begin{minipage}[t]{0.45\textwidth}
    \includegraphics[width=\linewidth]{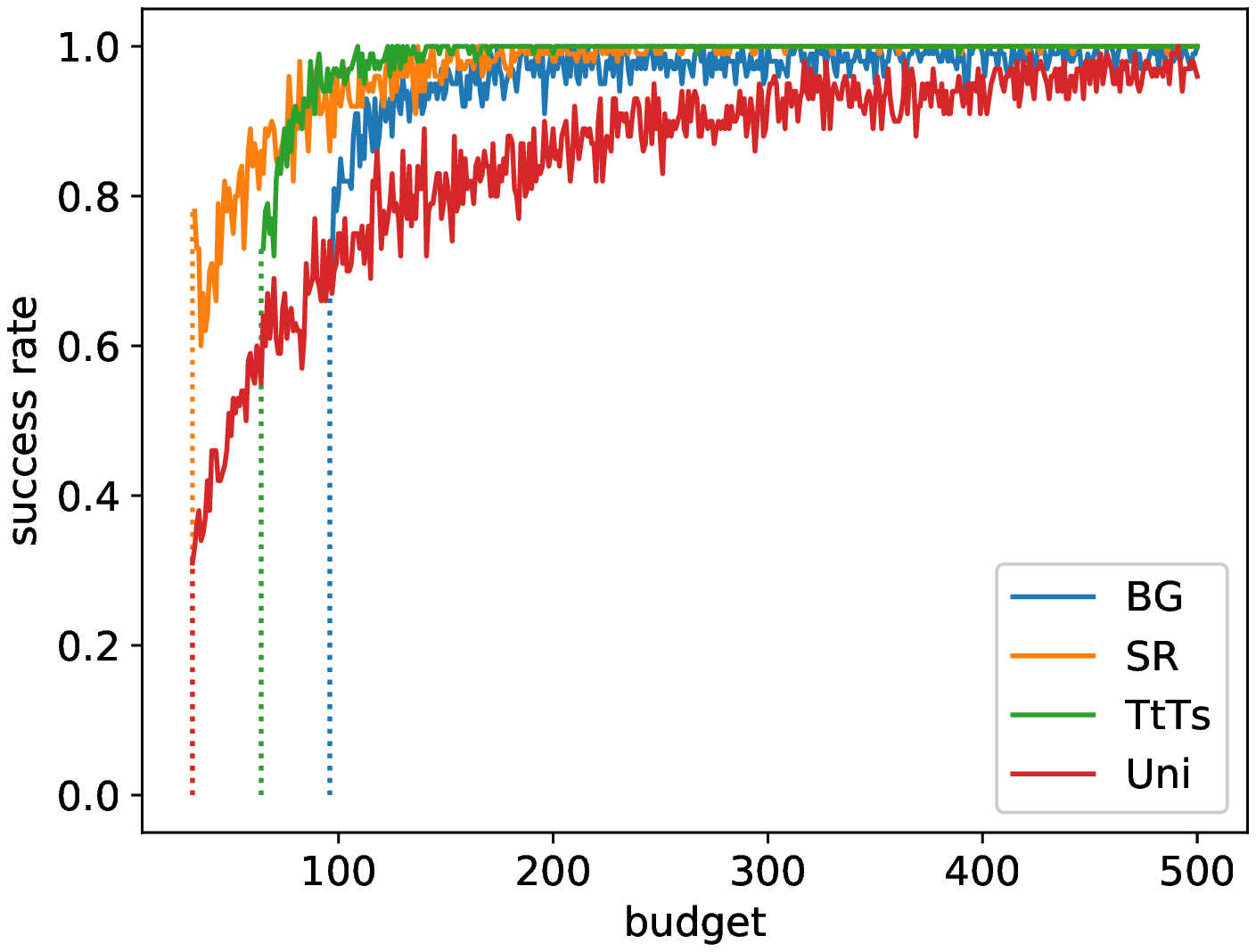}
  \end{minipage}
\caption{In this figure, we present the results for the experiment with $R_0=1.4$ (left panel) and $R_0=2.4$ (right panel). Each curve represents the rate of successful arm recommendations (y-axis) for a range of budgets (x-axis). A curve is shown for each of the considered algorithms: BayesGap (legend: BG), Successive Rejects (legend: SR), Top-two Thompson sampling (legend: TtTs) and Uniform sampling (legend: Uni).
}
\label{fig:bandit_run}
\end{figure}

In Section~\ref{sec:methods} we derived a statistic to express the probability of success ($P_s$) concerning a recommendation made by Top-two Thompson sampling. We analyzed this probability for all the Top-two Thompson sampling recommendations that were obtained in the experiment described above. To provide some insights on how this statistic can be used to support policy makers, we show the $P_s$ values of all Top-two Thompson sampling recommendations for $R_0=2.4$ in the left panel of Figure~\ref{fig:prob_of_success_24} (Figures for the other $R_0$ values in Section 5 of the Supplementary Information). This Figure indicates that $P_s$ closely follows recommendation correctness and that the uncertainty of $P_s$ is inversely proportional to the size of the available budget.
Additionally, in the right panel of Figure~\ref{fig:prob_of_success_24} (Figures for the other $R_0$ values in Section 6 of the Supplementary Information) we confirm that $P_s$ underestimates recommendation correctness.
These observations show that $P_s$ has the potential to serve as a conservative statistic to inform policy makers about the confidence of a particular recommendation, and thus can be used to define meaningful cutoffs to guide policy makers in their interpretation of the recommendation of preventive strategies. 

\begin{figure}
  \centering
  \begin{minipage}[t]{0.45\textwidth}
    \includegraphics[width=\linewidth]{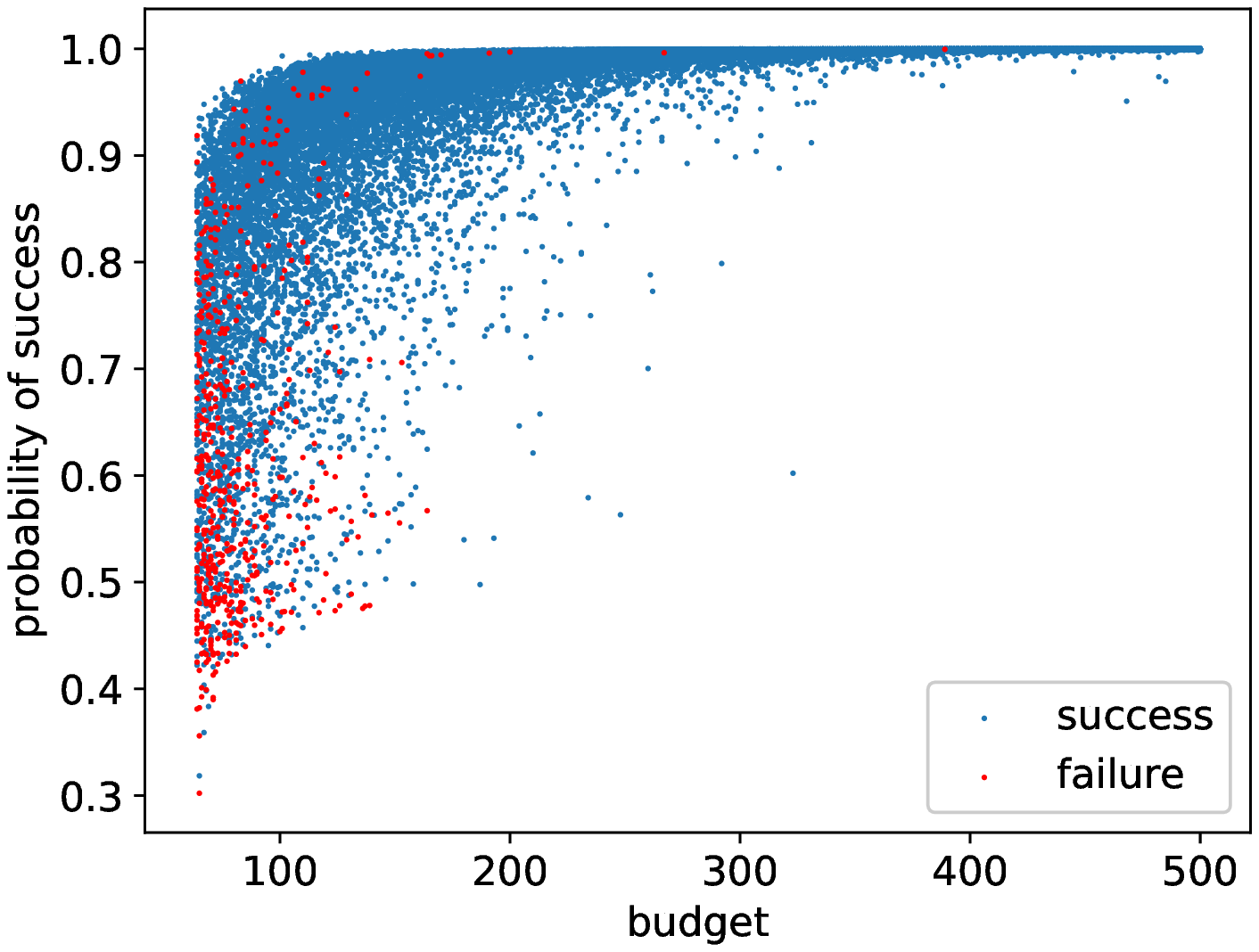}
  \end{minipage}
  \begin{minipage}[t]{0.45\textwidth}
    \includegraphics[width=\linewidth]{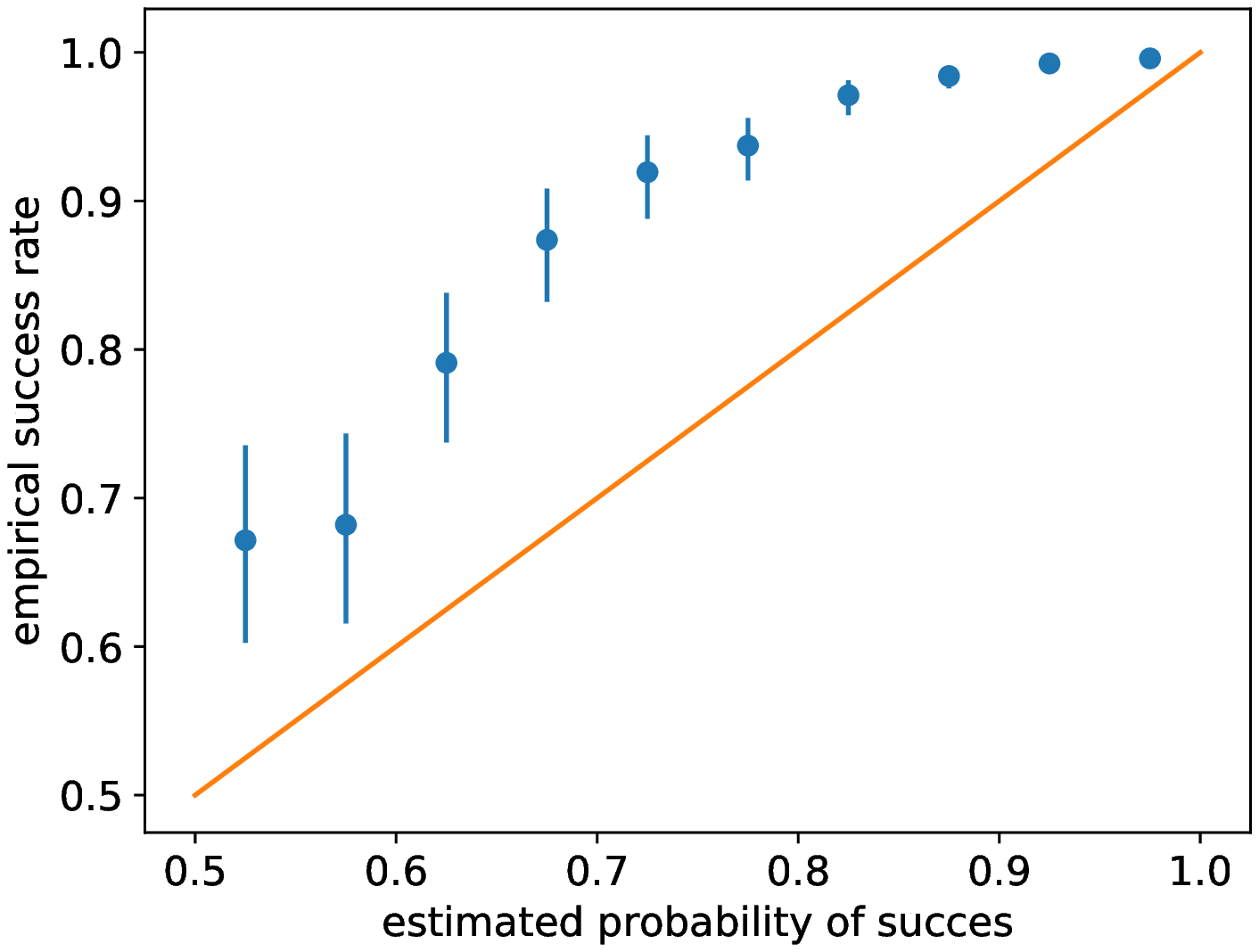}
  \end{minipage}
\caption{Top-two Thompson sampling was run 100 times for each budget for the experiment with $R_0=2.4$. For each of the recommendations, $P_s$ was computed. In the left panel, these $P_s$ values are shown as a scatter plot, where each point's color reflects the correctness of the recommendation (see legend). In the right panel, the $P_s$ values were binned (i.e., 0.5 to 1 in steps of 0.05). Per bin, we thus have a set of Bernoulli trials, for which we show the empirical success rate (blue scatter) and the Clopper-Pearson confidence interval (blue confidence bounds). The orange reference line denotes perfect correlation between the empirical success rate and the estimated probability of success.
}
\label{fig:prob_of_success_24}
\end{figure}

\section{Conclusion}
\label{sec:conclusion}
We formulate the objective to select the best preventive strategy in an individual-based model as a fixed budget best-arm identification problem. We set up an experiment to evaluate this setting in the context of a realistic pandemic influenza. To assess the best arm recommendation performance of the preventive bandit, we report a success rate over 100 independent bandit runs.

We demonstrate that it is possible to efficiently identify the optimal preventive strategy using only a limited number of model evaluations, even if there is a large number of preventive strategies to consider.
Compared to uniform sampling, our technique is able to recommend the best preventive strategy reducing the number of required model evaluations 2-to-3 times, when using Top-two Thompson sampling.
Additionally, we defined a statistic to support policy makers with their decisions, based on the posterior information obtained during Top-two Thompson sampling. As such, we present a decision support tool to assist policy makers to mitigate epidemics.
Our framework will enable the use of individual-based models in studies where it would otherwise be computationally too prohibitive, and allow researchers to explore a wider variety of model scenarios.

In this paper, we learn with respect to a single model outcome (i.e., single objective). However, for many pathogens it can be interesting to incorporate multiple objectives (e.g., morbidity, mortality, financial cost). Therefore, in future work, we aim to use \emph{multi-objective multi-armed bandits}.

\section*{Acknowledgments}
Pieter Libin and Timothy Verstraeten were supported by a PhD grant of the FWO (Fonds Wetenschappelijk Onderzoek - Vlaanderen).
Kristof Theys, Jelena Grujic and Diederik Roijers were supported by a postdoctoral grant of the FWO.
The computational resources were provided by an EWI-FWO grant (Theys, KAN2012 1.5.249.12.).

\pagebreak
\begin{center}
\textbf{\large Supplementary Information}
\end{center}
\setcounter{equation}{0}
\setcounter{figure}{0}
\setcounter{table}{0}
\setcounter{page}{1}
\setcounter{section}{0}
\makeatletter
\renewcommand{\theequation}{S\arabic{equation}}
\renewcommand{\thesection}{S\arabic{section}}
\renewcommand{\thefigure}{S\arabic{figure}}

\section{Introduction}
In this Supplementary Information we provide a proof for BayesGap's simple regret bound (Section 2). Furthermore, we provide additional figures that were omitted from the main manuscript: figures for the outcome (i.e., epidemic size) distributions (Section 3), figures for the experimental success rates (Section 4), figures for the probabilities of success (i.e., $P_s$ values) per budget  (Section 5) and figures for the binned distribution over $P_s$ values (Section 6). Finally, in Section 7, we describe the computational resources that were used to execute the simulations. 

\section{BayesGap simple regret bound for T-distributed posteriors}
\begin{lemma}
Consider a Jeffrey's prior $(\mu_k, \sigma^2_k) \sim \sigma_k^{-3}$over the parameters of the Gaussian reward distributions. Then the posterior mean of arm $k$ has the following nonstandardized t-distribution at pull $n_k$:
\begin{equation*}
\mu_k\ |\ \bar{x}_{k,n_k}, S_{k,n_k} \sim \mathcal{T}_{n_k}(\bar{x}_{k,n_k}, n_k^{-1}\sqrt{S_{k,n_k}})
\end{equation*}
where $n_k$ is the number of pulls for arm $k$, $\bar{x}_{k, n_k}$ is the sample mean and $S_{k, n_k}$ is the sum of squares.
\label{lemma:post_mean}
\end{lemma}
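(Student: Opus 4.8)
The plan is to carry out a direct conjugate-style Bayesian computation: form the joint posterior of $(\mu_k,\sigma_k^2)$ as the Gaussian likelihood of the $n_k$ observed rewards times the improper prior $\sigma_k^{-3}$, marginalise out $\sigma_k^2$ by recognising an inverse-gamma kernel, and identify the surviving marginal density of $\mu_k$ with a nonstandardised Student-$t$.

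First I would fix an arm, suppress the subscript $k$, write the $n := n_k$ rewards $r_1,\dots,r_n \overset{\text{iid}}{\sim}\mathcal{N}(\mu,\sigma^2)$, and use the standard decomposition $\sum_{m=1}^{n}(r_m-\mu)^2 = S + n(\bar x-\mu)^2$ with $\bar x = \bar x_{k,n_k}$ and $S = S_{k,n_k}$. Substituting into the likelihood and multiplying by the prior gives, up to a factor free of $\mu$ and $\sigma$,
\begin{equation*}
p(\mu,\sigma^2 \mid r_{1:n}) \propto (\sigma^2)^{-(n+3)/2}\exp\!\Big(-\tfrac{1}{2\sigma^2}\big(S + n(\bar x - \mu)^2\big)\Big),
\end{equation*}
where the exponent $-(n+3)/2$ comes from combining the $(\sigma^2)^{-n/2}$ of the likelihood with $\sigma^{-3}=(\sigma^2)^{-3/2}$ of the prior.

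Next I would integrate over $\sigma^2\in(0,\infty)$. The integrand is an unnormalised inverse-gamma density in $\sigma^2$ with shape $(n+1)/2$ and scale $\tfrac12\big(S+n(\bar x-\mu)^2\big)$, so applying $\int_0^\infty x^{-\alpha-1}e^{-\beta/x}\,dx = \Gamma(\alpha)\beta^{-\alpha}$ yields
\begin{equation*}
p(\mu \mid r_{1:n}) \propto \big(S + n(\bar x - \mu)^2\big)^{-(n+1)/2} \propto \Big(1 + \tfrac{1}{n}\cdot\tfrac{(\mu-\bar x)^2}{S/n^2}\Big)^{-(n+1)/2}.
\end{equation*}
Comparing with the density of $\mathcal{T}_\nu(\ell,s)$, which is proportional to $\big(1 + \tfrac{1}{\nu}(\mu-\ell)^2/s^2\big)^{-(\nu+1)/2}$, I read off $\nu = n_k$, $\ell = \bar x_{k,n_k}$, $s = n_k^{-1}\sqrt{S_{k,n_k}}$, which is exactly the claim; equivalently, this is the assertion that $\sqrt{n_k^2/S_{k,n_k}}\,(\mu_k - \bar x_{k,n_k}) \sim \mathcal{T}_{n_k}$ stated in Equation~\eqref{eq:posterior}.

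The computation is routine, so the only points I would flag explicitly are the two places where care is needed. The first is the exponent bookkeeping for $\sigma^2$: it is the $\sigma^{-3}$ prior, rather than the more familiar $\sigma^{-2}$, that shifts the degrees of freedom from $n_k-1$ to $n_k$. The second is propriety: the $\sigma^2$-integral converges for every $n_k\ge 1$, but the resulting $t$-density is non-degenerate only when $S_{k,n_k}>0$, i.e.\ after $n_k\ge 2$ pulls with distinct values, and its variance — which the BayesGap bounds $U_k,L_k$ rely on — is finite only for $n_k\ge 3$, consistent with the initialisation requirements of Section~\ref{subsec:methods:bestarmid}.
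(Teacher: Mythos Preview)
Your derivation is correct and complete. The paper itself does not actually prove this lemma: its ``proof'' consists solely of the sentence ``This lemma was presented and proved by Honda et al.\ \cite{Honda2014}.'' Your explicit conjugate-style computation---multiplying the Gaussian likelihood by the $\sigma^{-3}$ prior, integrating out $\sigma^2$ via the inverse-gamma identity, and reading off the $t$-parameters---is precisely the standard argument that underlies the cited result, so you have supplied what the paper defers to the reference. Your closing remarks are also on point: the observation that the $\sigma^{-3}$ prior (rather than $\sigma^{-2}$) is what produces $n_k$ rather than $n_k-1$ degrees of freedom, and the note that finite posterior variance requires $n_k\ge 3$, connect the lemma cleanly to the initialisation requirements discussed in Section~\ref{subsec:methods:bestarmid}.
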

\begin{proof}
This lemma was presented and proved by Honda et al. \cite{Honda2014}.
\end{proof}

\begin{lemma}
Consider a random variable $X \sim \mathcal{T}_\nu(\mu, \lambda)$ with variance $\sigma^2 = \frac{\nu}{\nu - 2}\lambda^2$, $\nu > 2$ and $\beta > 0$. The probability that $X$ is within a radius $\beta \sigma$ from its mean can then be written as:\\
\begin{equation*}
\begin{split}
P(|X - \mu| < \beta\sigma) \ge 1 - 2\frac{\nu}{\nu-1} \frac{C(\nu)}{\beta} \left(1 + \frac{\beta^2}{\nu}\right)^{-0.5(\nu-1)}
\end{split}
\end{equation*}
where\\
\begin{equation*}
\begin{split}
C(\nu) = \frac{\Gamma(0.5\nu+0.5)}{\Gamma(0.5\nu) \sqrt{\pi \nu}}
\end{split}
\end{equation*}
is the normalizing constant of a standard t-distribution.
\label{lemma:upper_bound}
\end{lemma}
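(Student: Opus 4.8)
The plan is to reduce the statement to a one-sided tail bound for a \emph{standard} Student $t$ variable and then estimate that tail by an elementary integral manipulation. First I would write $X = \mu + \lambda Z$, where $Z \sim \mathcal{T}_\nu$ is standard with density $f_Z(t) = C(\nu)\,(1 + t^2/\nu)^{-(\nu+1)/2}$; this is precisely what the nonstandardized notation $\mathcal{T}_\nu(\mu,\lambda)$ encodes. Since $\nu > 2$, the stated variance identity $\sigma^2 = \tfrac{\nu}{\nu-2}\lambda^2$ gives $\beta\sigma/\lambda = \beta\sqrt{\nu/(\nu-2)} \ge \beta$. Because $t \mapsto P(|Z| < t)$ is nondecreasing,
\begin{equation*}
P(|X - \mu| < \beta\sigma) = P\!\left(|Z| < \beta\sqrt{\tfrac{\nu}{\nu-2}}\right) \ge P(|Z| < \beta) = 1 - 2\,P(Z \ge \beta),
\end{equation*}
the last equality using the symmetry of $Z$. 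Hence it suffices to show $P(Z \ge \beta) \le \tfrac{\nu}{\nu-1}\,\tfrac{C(\nu)}{\beta}\,(1+\beta^2/\nu)^{-(\nu-1)/2}$.

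For that tail I would start from $P(Z \ge \beta) = C(\nu)\int_\beta^\infty (1+t^2/\nu)^{-(\nu+1)/2}\,dt$ and insert the factor $t/\beta \ge 1$, which is valid throughout the range of integration, to obtain $\int_\beta^\infty (1+t^2/\nu)^{-(\nu+1)/2}\,dt \le \tfrac{1}{\beta}\int_\beta^\infty t\,(1+t^2/\nu)^{-(\nu+1)/2}\,dt$. The remaining integral is exactly solvable: the substitution $u = 1 + t^2/\nu$ turns it into $\tfrac{\nu}{2}\int_{1+\beta^2/\nu}^\infty u^{-(\nu+1)/2}\,du$, which converges because $\nu > 1$ and evaluates to $\tfrac{\nu}{\nu-1}\,(1+\beta^2/\nu)^{-(\nu-1)/2}$. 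Chaining these estimates with the previous display gives the claimed inequality.

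I do not expect a genuine obstacle. The only points requiring care are (i) the reduction step, where one must observe that the variance-inflation factor $\sqrt{\nu/(\nu-2)} > 1$ lets us replace the event $\{|Z| < \beta\sigma/\lambda\}$ by the smaller event $\{|Z| < \beta\}$ while preserving the direction of the inequality, and (ii) tracking the exponent arithmetic $-(\nu+1)/2 + 1 = -(\nu-1)/2$ together with the convergence condition $\nu > 1$ (which is implied by the hypothesis $\nu > 2$). Everything else is a routine change of variables.
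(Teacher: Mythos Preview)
Your argument is correct and uses the same core device as the paper: bound the Student-$t$ tail by inserting the factor $t/(\text{lower limit})\ge 1$ so that the integrand acquires an explicit antiderivative. The one difference is where the slack is introduced: you first shrink the event $\{|Z|<\beta\sqrt{\nu/(\nu-2)}\}$ to $\{|Z|<\beta\}$ via monotonicity and then integrate from $\beta$, landing exactly on the stated bound; the paper instead integrates directly from $\beta\sqrt{\nu/(\nu-2)}$ and obtains the slightly sharper expression $2\,\dfrac{\sqrt{\nu(\nu-2)}}{\nu-1}\,\dfrac{C(\nu)}{\beta}\bigl(1+\beta^2/(\nu-2)\bigr)^{-(\nu-1)/2}$, from which the lemma as stated follows since $\sqrt{\nu(\nu-2)}\le \nu$ and $1+\beta^2/(\nu-2)\ge 1+\beta^2/\nu$. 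Your route is a bit cleaner for proving the lemma as written; the paper's route buys a marginally tighter constant.
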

\begin{proof}
Consider a random variable $Z \sim \mathcal{T}_\nu(0, 1)$, $\nu > 2$ and $\beta > 0$. Then the probability of $Z$ being greater than $\beta \sqrt{\frac{\nu}{\nu - 2}}$ is:
\begin{equation*}
\begin{split}
P(Z > \beta\sqrt{\frac{\nu}{\nu - 2}}) &\stackrel{(1)}{=} \int^{+\infty}_{\beta\sqrt{\frac{\nu}{\nu - 2}}} \mathcal{T}_\nu(z\ |\ 0, 1) dz\\
	&= C(\nu) \int^{+\infty}_{\beta\sqrt{\frac{\nu}{\nu - 2}}} \left(1 + \frac{z^2}{\nu}\right)^{-0.5(\nu+1)} dz\\
	&\stackrel{(2)}{\le} C(\nu) \int^{+\infty}_{\beta\sqrt{\frac{\nu}{\nu-2}}} \frac{z}{\beta\sqrt{\frac{\nu}{\nu - 2}}}\left(1 + \frac{z^2}{\nu}\right)^{-0.5(\nu+1)} dz\\
	&= -\frac{\nu}{\nu-1} \frac{\sqrt{\nu - 2}}{\beta\sqrt{\nu}} C(\nu) \int^{+\infty}_{\beta\sqrt{\frac{\nu}{\nu-2}}} -\frac{\nu-1}{\nu} z\left(1 + \frac{z^2}{\nu}\right)^{-0.5(\nu+1)} dz\\
	&\stackrel{(3)}{=} \left. -\frac{\sqrt{\nu(\nu - 2)}}{\nu-1} \frac{C(\nu)}{\beta} \left(1 + \frac{z^2}{\nu}\right)^{-0.5(\nu-1)} \right|^{+\infty}_{\beta\sqrt{\frac{\nu}{\nu-2}}}\\
	&\stackrel{(4)}{=} \frac{\sqrt{\nu(\nu - 2)}}{\nu-1} \frac{C(\nu)}{\beta} \left(1 + \frac{\beta^2}{\nu - 2}\right)^{-0.5(\nu-1)}
\end{split}
\end{equation*}
The probability of $Z$ being greater than the lower bound $\beta \sqrt{\frac{\nu}{\nu-2}}$ is the integral over its probability density function, starting from that lower bound (1). In the integral, we introduce a factor $\frac{z}{\beta \sqrt{\frac{\nu}{\nu-2}}}$, which is greater than $1$ for the considered values of $z$ (2). We then take note of the following derivative, and use this result to analytically solve the integral (3):
\begin{equation*}
\frac{d}{dx}\left(1 + \frac{x^2}{\nu}\right)^{-0.5(\nu-1)} = - \frac{\nu-1}{\nu} x \left(1 + \frac{x^2}{\nu}\right)^{-0.5(\nu+1)}
\end{equation*}
Finally, we solve the primitive from $\frac{z}{\beta \sqrt{\frac{\nu}{\nu-2}}}$ to infinity (4).

Next, we apply a union bound to obtain a lower bound on the probability that the magnitude of $Z$ is smaller than $\beta\sqrt{\frac{\nu}{\nu-2}}$:
\begin{equation*}
\begin{split}
P(|Z| < \beta\sqrt{\frac{\nu}{\nu-2}}) \ge 1 - 2\frac{\sqrt{\nu(\nu - 2)}}{\nu-1} \frac{C(\nu)}{\beta} \left(1 + \frac{\beta^2}{\nu - 2}\right)^{-0.5(\nu-1)}
\end{split}
\end{equation*}
Finally, consider $Z = \frac{(X - \mu)}{\lambda}$:
\begin{equation*}
\begin{split}
P(|X - \mu| < \beta\sqrt{\frac{\nu}{\nu-2}}\lambda) \ge 1 - 2\frac{\sqrt{\nu(\nu - 2)}}{\nu-1} \frac{C(\nu)}{\beta} \left(1 + \frac{\beta^2}{\nu - 2}\right)^{-0.5(\nu-1)}
\end{split}
\end{equation*}
\end{proof}

\begin{lemma}
Consider a $K$-armed bandit problem with budget $T$ and $K$ arms. Let $U_k(t)$ and $L_k(t)$ be upper and lower bounds that hold for all times $t \le T$ and all arms $k \le K$ with probability $1 - \delta_k(t)$. Finally, let $g_k$ be a monotonically decreasing function such that $U_k(t) - L_k(t) \le g_k(n_k(t-1))$ and $\sum\limits^K_{k=1} g_k^{-1}(H_{k,\epsilon}) \le T - K$. We can then bound the simple regret $R_T$ as:
\begin{equation*}
\begin{split}
P(R_{T} < \epsilon) \ge 1 - \sum\limits^K_{k=1} \sum\limits^T_{t=1} \delta_{k}(t)
\end{split}
\end{equation*}
\label{lemma:regret}
\end{lemma}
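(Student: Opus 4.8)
The plan is to adapt the gap-based regret analysis of BayesGap \cite{hoffman2014correlation}, in two stages: a probabilistic reduction to a clean event via a union bound, followed by a deterministic argument on that event. For the first stage, let $\xi$ be the event that all the confidence bounds hold simultaneously, i.e.\ $L_k(t) \le \mu_k \le U_k(t)$ for every arm $k \le K$ and every round $t \le T$. By hypothesis the interval $[L_k(t), U_k(t)]$ fails to contain $\mu_k$ with probability at most $\delta_k(t)$, so a union bound over the $KT$ pairs $(k,t)$ gives $P(\xi^{c}) \le \sum_{k=1}^{K}\sum_{t=1}^{T}\delta_k(t)$. It therefore suffices to show that $\xi$ implies $R_T < \epsilon$, since everything below is deterministic.

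Work on $\xi$. Since $U_l(t) \ge \mu_l$ and $L_k(t) \le \mu_k$, the gap quantity obeys $B_k(t) = \max_{l\neq k} U_l(t) - L_k(t) \ge \max_{l\neq k}\mu_l - \mu_k = \Delta_k$ for all $k$ and $t$. Let $\Omega = J(t^{\dagger})$ with $t^{\dagger} = \operatorname{argmin}_{t\le T} B_{J(t)}(t)$ be the recommended arm; then $\Delta_\Omega \le B_\Omega(t^{\dagger}) = \min_{t\le T} B_{J(t)}(t)$, and since $R_T = \max(0,\Delta_\Omega)$ it is enough to exhibit a single round $t \le T$ with $B_{J(t)}(t) < \epsilon$. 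Suppose, toward a contradiction, that $B_{J(t)}(t) \ge \epsilon$ for every $t \le T$.

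The heart of the proof is a selection lemma: on $\xi$, whenever $B_{J(t)}(t) \ge \epsilon$, the arm $a(t) \in \{J(t), j(t)\}$ pulled at round $t$ has confidence width $U_{a(t)}(t) - L_{a(t)}(t) \ge H_{a(t),\epsilon}$. I would prove it by expanding $B_{J(t)}(t) = U_{j(t)}(t) - L_{J(t)}(t)$, replacing $U_{j(t)}(t)$ and $L_{J(t)}(t)$ by $\mu_{j(t)}$ and $\mu_{J(t)}$ up to the corresponding half-widths, bounding $\mu_{j(t)} - \mu_{J(t)} \le \Delta_{J(t)}$, and then invoking two structural facts about the selection rule: $a(t)$ is the wider of $J(t)$ and $j(t)$, so both of their half-widths are at most that of $a(t)$; and $j(t)$ maximizes $U_l(t)$ over $l \neq J(t)$, which keeps $\mu_{j(t)}$ close to $\max_{l\neq J(t)}\mu_l$ and hence controls $\Delta_{j(t)}$ in terms of $j(t)$'s own width. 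Feeding these into the expansion, together with $B_{J(t)}(t) \ge \Delta_{J(t)}$ and $B_{J(t)}(t) \ge \epsilon$, and splitting on whether $a(t) = J(t)$ or $a(t) = j(t)$, produces a lower bound on $a(t)$'s width of the form $\max\!\big(\tfrac{1}{2}(\Delta_{a(t)}+\epsilon),\,\epsilon\big) = H_{a(t),\epsilon}$.

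Granting the selection lemma, the assumption $U_{a(t)}(t) - L_{a(t)}(t) \le g_{a(t)}(n_{a(t)}(t-1))$ and the monotone decrease of $g_{a(t)}$ give $n_{a(t)}(t-1) \le g_{a(t)}^{-1}(H_{a(t),\epsilon})$ at every round $t$ under the contradiction hypothesis, so each arm $k$ can be selected only while it has been pulled at most $g_k^{-1}(H_{k,\epsilon})$ times. Summing these per-arm caps over the $T$ rounds --- $K$ of which form the initialization phase --- contradicts the budget condition $\sum_{k=1}^{K} g_k^{-1}(H_{k,\epsilon}) \le T - K$. Hence $B_{J(t)}(t) < \epsilon$ at some round $t \le T$, so $\Delta_\Omega < \epsilon$ and $R_T < \epsilon$ on $\xi$, which establishes the stated bound. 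I expect the selection lemma to be the main obstacle: carrying the constants through the $a(t) = J(t)$ versus $a(t) = j(t)$ case split so that the width bound is exactly $H_{k,\epsilon} = \max(\tfrac{1}{2}(\Delta_k+\epsilon),\epsilon)$ rather than a larger constant multiple, together with the integer book-keeping needed to turn the per-arm caps into a clean contradiction with the budget condition.
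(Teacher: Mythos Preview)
Your proposal is correct and follows the same two-stage structure as the paper: define the clean event on which all confidence intervals hold, bound its failure probability by the union bound $\sum_{k,t}\delta_k(t)$, and then argue deterministically that the clean event forces $R_T < \epsilon$. The paper's own proof is in fact much terser than yours---it carries out the union bound explicitly and then simply cites \cite{hoffman2014correlation} for the entire deterministic part, whereas you have sketched that deterministic argument (the selection lemma and the counting contradiction against $\sum_k g_k^{-1}(H_{k,\epsilon}) \le T-K$) in detail.
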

\begin{proof}
First, we define $\mathcal{E}$ as the event in which every mean $\mu_k$ is bounded by its associated bounds (i.e., $U_k(t)$ and $L_k(t)$) for each time step \cite{hoffman2014correlation}.
\begin{equation*}
\mathcal{E} := \forall k \le K, \forall t \le T: L_k(t) \le \mu_k \le U_k(t)
\end{equation*}
The probability of $\mu_k$ deviating from a single bound at time $t$ is by definition $\delta_{k}(t)$. When applying the union bound, we obtain $P(\mathcal{E})\ge 1 - \sum\limits^K_{k=1} \sum\limits^T_{t=1} \delta_{k}(t)$.
The probability of regret is equal to the probability of the event $\mathcal{E}$ occuring, as proven in \cite{hoffman2014correlation}.
\end{proof}

\begin{theorem}
Consider a $K$-armed Gaussian bandit problem with budget $T$ and unknown variance. Let $\sigma_G^2$ be a generalization of that variance over all arms, and $U_k(t)$ and $L_k(t)$ respectively be the upper and lower bounds for each arm $k$ at time $t$, where $U_k(t) = \hat{\mu}_k(t) + \beta \hat{\sigma}_k(t)$ and $L_k(t) = \hat{\mu}_k(t) - \beta \hat{\sigma}_k(t)$. The simple regret is then bounded as:
\begin{equation*}
\begin{split}
P(R_{T} \le \epsilon) &\ge 1 - 2\sum\limits^K_{k=1} \sum\limits^T_{t=1} \frac{\sqrt{n_{k}(t)(n_{k}(t) - 2)}}{n_{k}(t)-1} \frac{C(n_{k}(t))}{\beta}\left(1 + \frac{\beta^2}{n_{k}(t) - 2}\right)^{-0.5(n_{k}(t)-1)}\\
&\ge 1 - O\left(K T \left(1 + \frac{\beta^2}{\min\limits_{k,t} n_{k}(t)}\right)^{-0.5\min\limits_{k,t}n_{k}(t)}\right)
\end{split}
\end{equation*}
where:
\begin{equation*}
\beta = \sqrt{\frac{T - 3K}{4 H_{\epsilon} \sigma_G^2}}
\end{equation*}
Note that when $\min\limits_{k,t} n_{k}(t) \rightarrow +\infty$, the bound decreases exponentially in $\beta$, similar to the problem setting presented in \cite{hoffman2014correlation}. Intuitively, this result makes sense, as for known variances, a Gaussian can be used to describe the posterior means, and indeed, as the number of pulls approaches infinity, our t-distributions converge to Gaussians.
\end{theorem}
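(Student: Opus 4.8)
The plan is to specialize the generic BayesGap analysis of Lemma~\ref{lemma:regret} to the concrete t-distributed posteriors of Lemma~\ref{lemma:post_mean}, using the anti-concentration estimate of Lemma~\ref{lemma:upper_bound} to extract an explicit per-arm, per-round failure probability $\delta_k(t)$, and then to pick $\beta$ as large as the width condition of Lemma~\ref{lemma:regret} allows. Concretely, by Lemma~\ref{lemma:post_mean} the posterior of $\mu_k$ after $n_k(t)$ pulls is $\mathcal{T}_{n_k(t)}(\hat\mu_k(t),\,n_k(t)^{-1}\sqrt{S_{k,n_k(t)}})$, so its posterior standard deviation $\hat\sigma_k(t)$ satisfies $\hat\sigma_k(t)^2 = S_{k,n_k(t)}/\big(n_k(t)(n_k(t)-2)\big)$, which is well-defined precisely because each arm is initialized with $3$ pulls so that $n_k(t)>2$. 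The bounds $U_k(t)=\hat\mu_k(t)+\beta\hat\sigma_k(t)$, $L_k(t)=\hat\mu_k(t)-\beta\hat\sigma_k(t)$ thus describe a window of radius $\beta\hat\sigma_k(t)$ around the posterior mean, so applying Lemma~\ref{lemma:upper_bound} with $\nu=n_k(t)$ gives $P\big(L_k(t)\le\mu_k\le U_k(t)\big)\ge 1-\delta_k(t)$ with
\begin{equation*}
\delta_k(t)=2\,\frac{\sqrt{n_k(t)(n_k(t)-2)}}{n_k(t)-1}\,\frac{C(n_k(t))}{\beta}\Big(1+\frac{\beta^2}{n_k(t)-2}\Big)^{-0.5(n_k(t)-1)}.
\end{equation*}

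Next I would check the width hypothesis of Lemma~\ref{lemma:regret}. Since $U_k(t)-L_k(t)=2\beta\hat\sigma_k(t)$ and $\hat\sigma_k(t)^2 = S_{k,n_k(t)}/\big(n_k(t)(n_k(t)-2)\big)$, replacing the empirical sum of squares by the generalized variance $\sigma_G^2$ (which, by construction from the pooled initialization variance $\bar s_G^2$, is meant to dominate the individual dispersions) yields the monotonically decreasing envelope $g_k(n)=2\beta\sigma_G/\sqrt{n-2}$ with $U_k(t)-L_k(t)\le g_k(n_k(t-1))$ and $g_k^{-1}(y)=4\beta^2\sigma_G^2/y^2+2$. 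Then $\sum_{k=1}^K g_k^{-1}(H_{k,\epsilon})=4\beta^2\sigma_G^2\sum_k H_{k,\epsilon}^{-2}+2K=4\beta^2\sigma_G^2 H_\epsilon+2K$, and imposing $\sum_k g_k^{-1}(H_{k,\epsilon})\le T-K$ forces $\beta\le\sqrt{(T-3K)/(4H_\epsilon\sigma_G^2)}$; taking $\beta$ at this boundary — the largest exploration coefficient consistent with the budget — gives exactly the stated choice, the $3K$ arising as the $K$ of Lemma~\ref{lemma:regret}'s $T-K$ plus the $2K$ contributed by the ``$+2$'' in $g_k^{-1}$.

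With $\delta_k(t)$ identified and the width condition verified, Lemma~\ref{lemma:regret} immediately gives the first displayed inequality $P(R_T\le\epsilon)\ge 1-\sum_{k}\sum_{t}\delta_k(t)$. For the $O(\cdot)$ form I would bound the prefactors: $\sqrt{n(n-2)}/(n-1)\le 1$ for $n\ge 3$, $C(\nu)$ is uniformly bounded for $\nu\ge 3$ (it increases to $(2\pi)^{-1/2}$), and $1/\beta$ is constant for fixed $T$; there are at most $KT$ summands; and in the regime where $\min_{k,t}n_k(t)$ is large (hence $\beta$ is correspondingly large), the map $n\mapsto(1+\beta^2/(n-2))^{-0.5(n-1)}$ is decreasing over the relevant range of $n$, so each summand is at most its value at $n=\min_{k,t}n_k(t)$. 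Collecting these gives the claimed $O\big(KT(1+\beta^2/\min_{k,t}n_k(t))^{-0.5\min_{k,t}n_k(t)}\big)$ bound; the closing remark then follows because the t-posteriors converge to Gaussians with known variance as $\min_{k,t}n_k(t)\to\infty$, recovering the exponential-in-$\beta$ rate of \cite{hoffman2014correlation}.

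The main obstacle is the second step: passing from the \emph{random} confidence width $2\beta\hat\sigma_k(t)$ to a \emph{deterministic}, monotone envelope $g_k(n_k(t-1))$. This is where one genuinely needs control of $S_{k,n_k}$ uniformly over arms and rounds, and where the informal notion of a ``generalized variance'' $\sigma_G^2$ and its estimator $\bar s_G^2$ does the work; a fully rigorous version would either require a high-probability envelope on $S_{k,n_k}/n_k$ (folding an extra failure term into $\sum_k\sum_t\delta_k(t)$) or an a priori bound on the reward variances. A secondary soft spot is the monotonicity of the tail term used in the $O(\cdot)$ bound, which holds only once $\beta$ (equivalently $T$) is sufficiently large; for small budgets that step is heuristic rather than exact.
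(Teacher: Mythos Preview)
Your proposal is correct and follows essentially the same route as the paper: invoke Lemma~\ref{lemma:post_mean} to identify $\hat\sigma_k(t)$, use Lemma~\ref{lemma:upper_bound} to extract the per-arm, per-round failure probability $\delta_k(t)$, build the width envelope $g_k$, solve $\sum_k g_k^{-1}(H_{k,\epsilon})=T-K$ for $\beta$, and conclude via Lemma~\ref{lemma:regret}. The two soft spots you flag --- replacing the random $S_{k,n_k}$ by a deterministic $\sigma_G^2$ so that $g_k$ is genuinely monotone and deterministic, and the monotonicity needed for the $O(\cdot)$ simplification --- are present in the paper's own argument as well (the paper simply ``generalizes $s_k^2(t)$ to $\sigma_G^2$'' without further justification and states the asymptotic form without proof), so your critique is apt rather than a deviation.
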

\begin{proof}
According to Lemma~\ref{lemma:post_mean}, the posterior over the average reward is a t-distribution with scaling factor $\lambda_k(t) = n_k(t)^{-1}\sqrt{S_{k,n_k(t)}}$.
Therefore,
\begin{equation*}
\begin{split}
U_k(t+1) - L_k(t+1) &= 2\beta \hat{\sigma}_k(t)\\
&\stackrel{(1)}{=} 2\beta \sqrt{n_{k}(t)(n_{k}(t)-2)^{-1}\lambda_k(t)^2}\\
&\stackrel{(2)}{=} \sqrt{n_{k}(t)(n_{k}(t)-2)^{-1}n_{k}(t)^{-2}S_{k, n_k(t)}}\\
&= \sqrt{(n_{k}(t)-2)^{-1}\frac{S_{k, n_k(t)}}{n_{k}(t)}}\\
&\stackrel{(3)}{=} \sqrt{(n_k(t) - 2)^{-1} s^2_{k}(t)}\\
&\stackrel{(4)}{=} g_k(n_k(t))\\
\end{split}
\end{equation*}
The variance of a t-distribution equals $\frac{n_{k}(t)}{n_{k}(t)-2}\lambda_k(t)^2$ for arm $k$ at time $t$, with scaling factor $\lambda_k(t)$ as described in Lemma~\ref{lemma:post_mean} (1 + 2). We denote the variance over rewards per arm as $s^2_k(t)$ (3) and define $g_k(n_k(t))$ to be the upper bound expression as specified in Lemma~\ref{lemma:regret} (4).

Next, we compute the inverse of $g_k(n)$:
\begin{equation*}
\begin{split}
g_k^{-1}(m) &= \frac{4\beta^2s^2_k(t)}{m^2} + 2\\
\end{split}
\end{equation*}
We generalize $s^2_k(t)$ to a variance $\sigma_G^2$ representative for all arms.\footnote{In the main paper, we choose $\sigma_G^2 = \bar{s}^2_G$ to be the mean over all arm-specific variances obtained after the initialization phase.} 
Approximating the hardness of the problem as $H_\epsilon = \sum_k H_{k,\epsilon}^{-2}$, where $H_{k,\epsilon}$ is the arm-dependent hardness defined in \cite{hoffman2014correlation}, we obtain $\beta$ as follows:
\begin{equation*}
\begin{split}
&\sum^K_{k=1} g_k^{-1}(H_{k\epsilon}) \approx 4\beta^2\sigma_G^2 H_{\epsilon} + 2K = T - K\\
&\Leftrightarrow \beta = \sqrt{\frac{T - 3K}{4 H_{\epsilon} \sigma_G^2}}\\
\end{split}
\end{equation*}
Finally, as the conditions in Lemma~\ref{lemma:regret} on the function $g_k$ are now satisfied, the simple regret bound can be obtained using Lemma~\ref{lemma:regret} and the probability that the true mean is out of the arm-specific bounds $U_k(t)$ and $L_k(t)$, given in Lemma~\ref{lemma:upper_bound}.
\end{proof}

\newpage
\section{Outcome (i.e., epidemic size) distributions}
\begin{figure}[!h] 
  \begin{subfigure}[b]{0.5\linewidth}
    \centering
    \includegraphics[width=0.75\linewidth]{violin_14} 
    \caption{Outcome distributions for $R_{0}=1.4$.} 
    \vspace{4ex}
  \end{subfigure}
  \begin{subfigure}[b]{0.5\linewidth}
    \centering
    \includegraphics[width=0.75\linewidth]{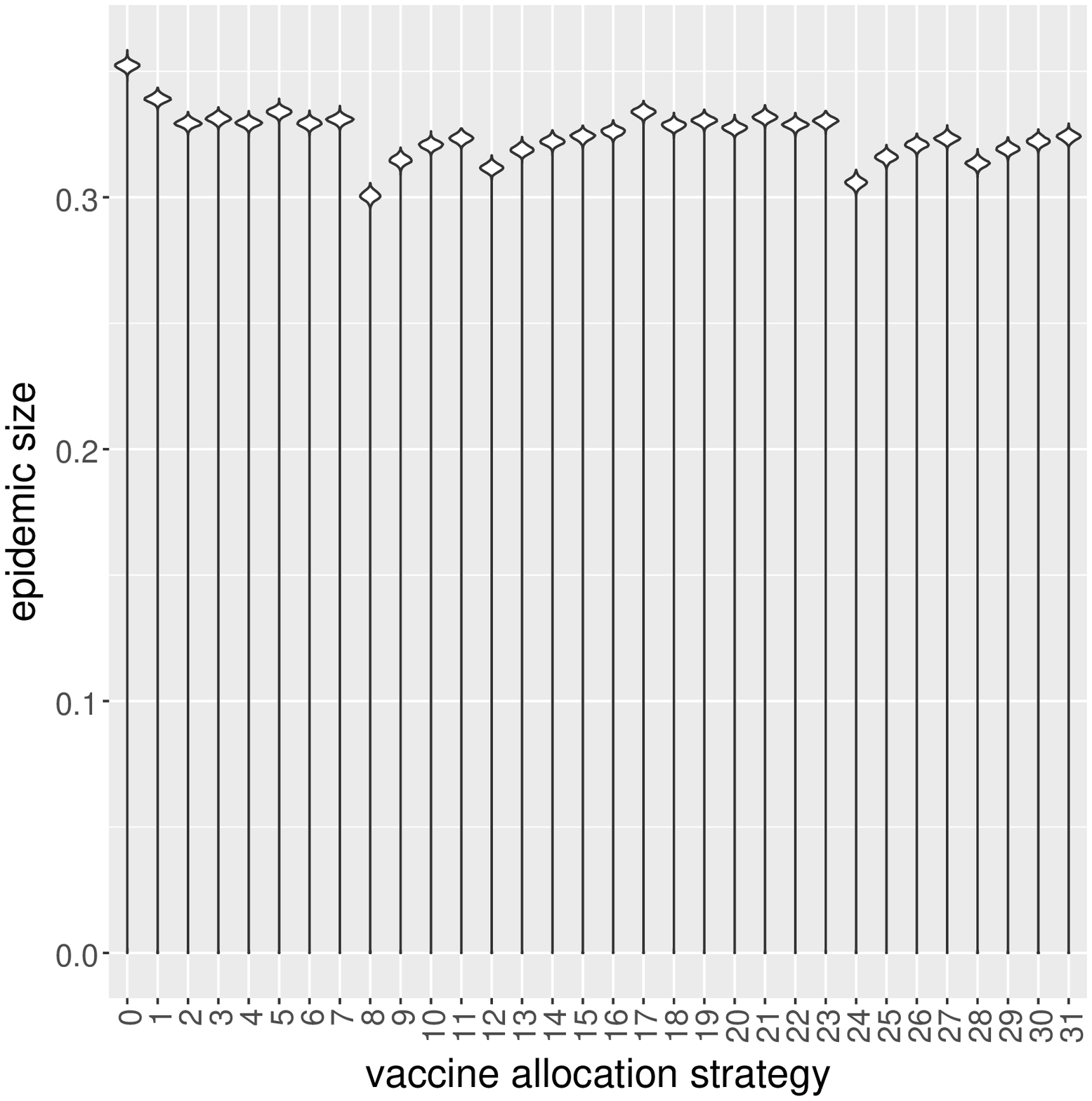} 
   \caption{Outcome distributions for $R_{0}=1.6$.}
    \vspace{4ex}
  \end{subfigure} 
  \begin{subfigure}[b]{0.5\linewidth}
    \centering
    \includegraphics[width=0.75\linewidth]{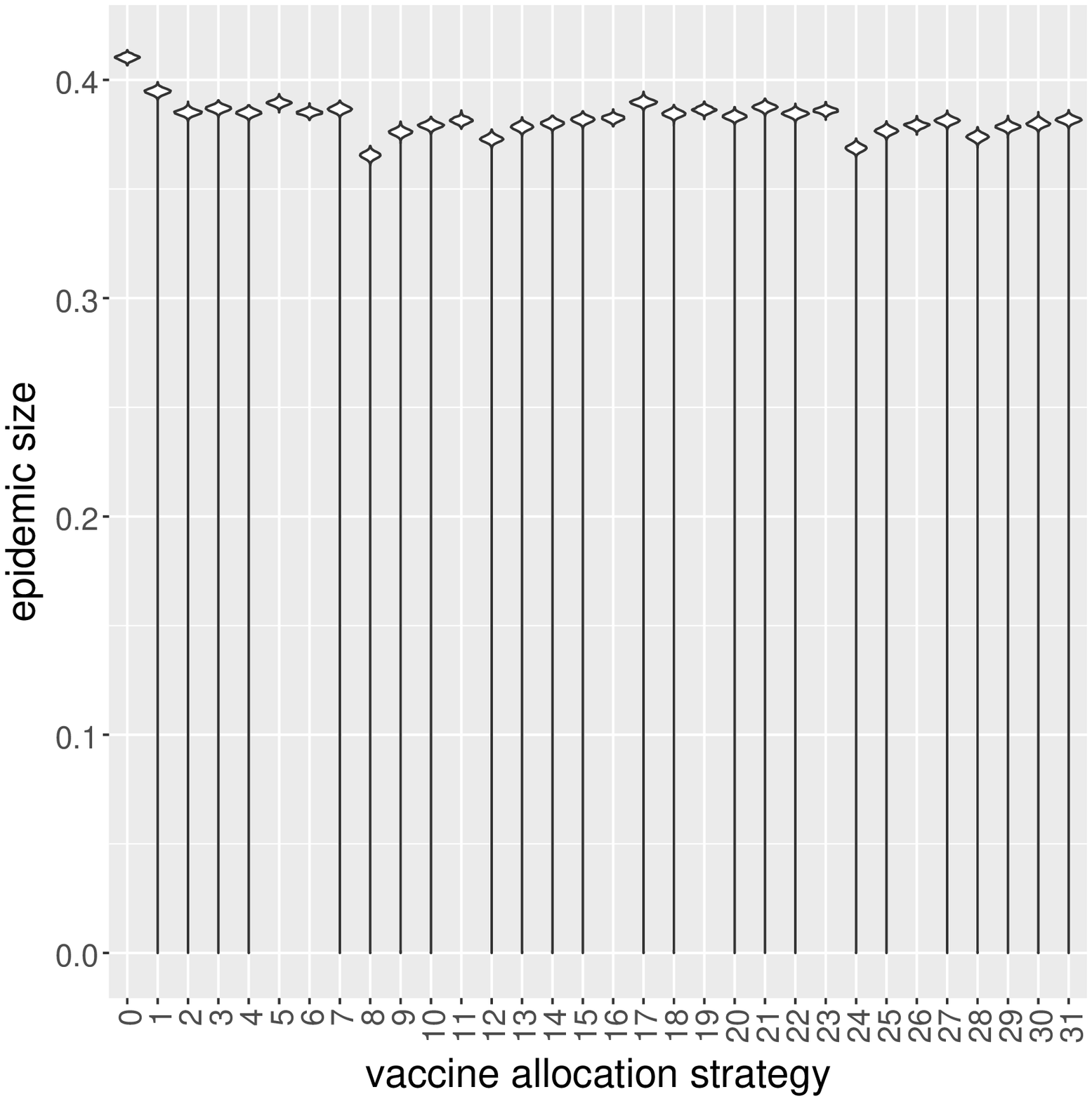} 
    \caption{Outcome distributions for $R_{0}=1.8$.}
  \end{subfigure}
  \begin{subfigure}[b]{0.5\linewidth}
    \centering
    \includegraphics[width=0.75\linewidth]{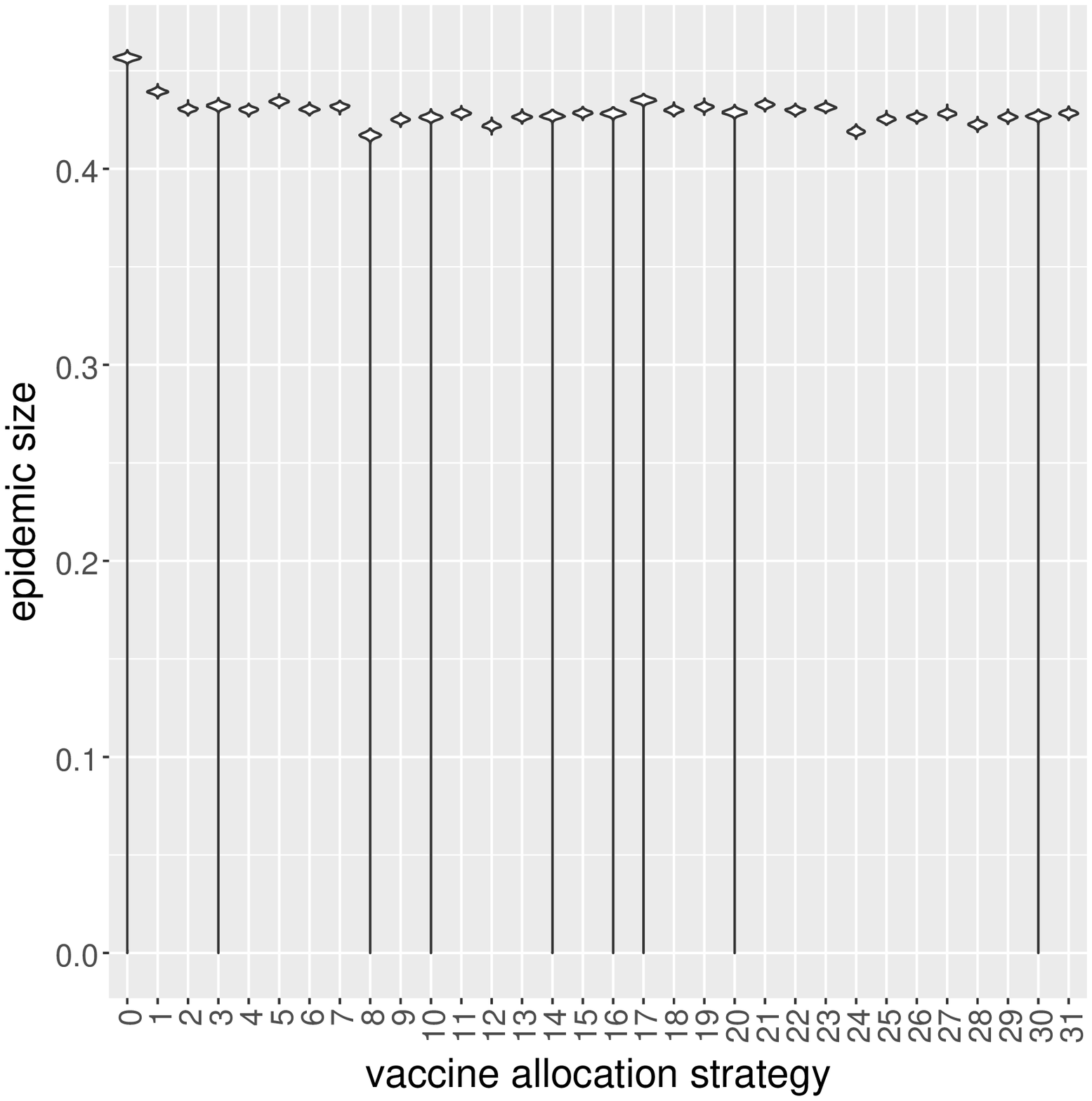}
    \caption{Outcome distributions for $R_{0}=2.0$.}
  \end{subfigure}
   \begin{subfigure}[b]{0.5\linewidth}
    \centering
    \includegraphics[width=0.75\linewidth]{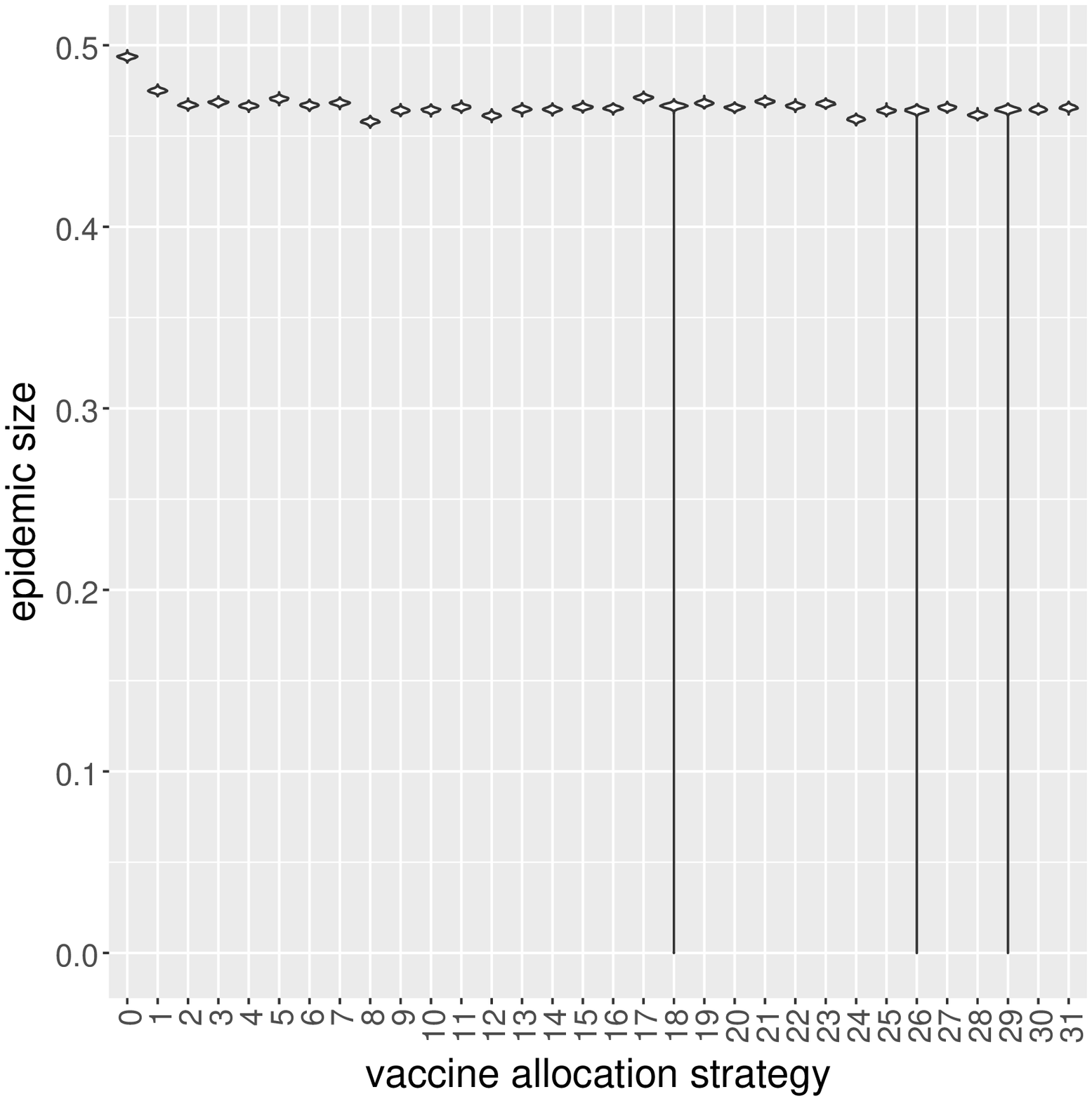}
    \caption{Outcome distributions for $R_{0}=2.2$.}
   \end{subfigure}
    \begin{subfigure}[b]{0.5\linewidth}
    \centering
    \includegraphics[width=0.75\linewidth]{violin_24}
    \caption{Outcome distributions for $R_{0}=2.4$.}
  \end{subfigure} 
\end{figure}

\newpage
\section{Bandit run success rates}
\begin{figure}[!h] 
  \begin{subfigure}[b]{0.5\linewidth}
    \centering
    \includegraphics[width=0.75\linewidth]{bandit_run_14} 
    \caption{Bandit run results for $R_{0}=1.4$.} 
    \vspace{4ex}
  \end{subfigure}
  \begin{subfigure}[b]{0.5\linewidth}
    \centering
    \includegraphics[width=0.75\linewidth]{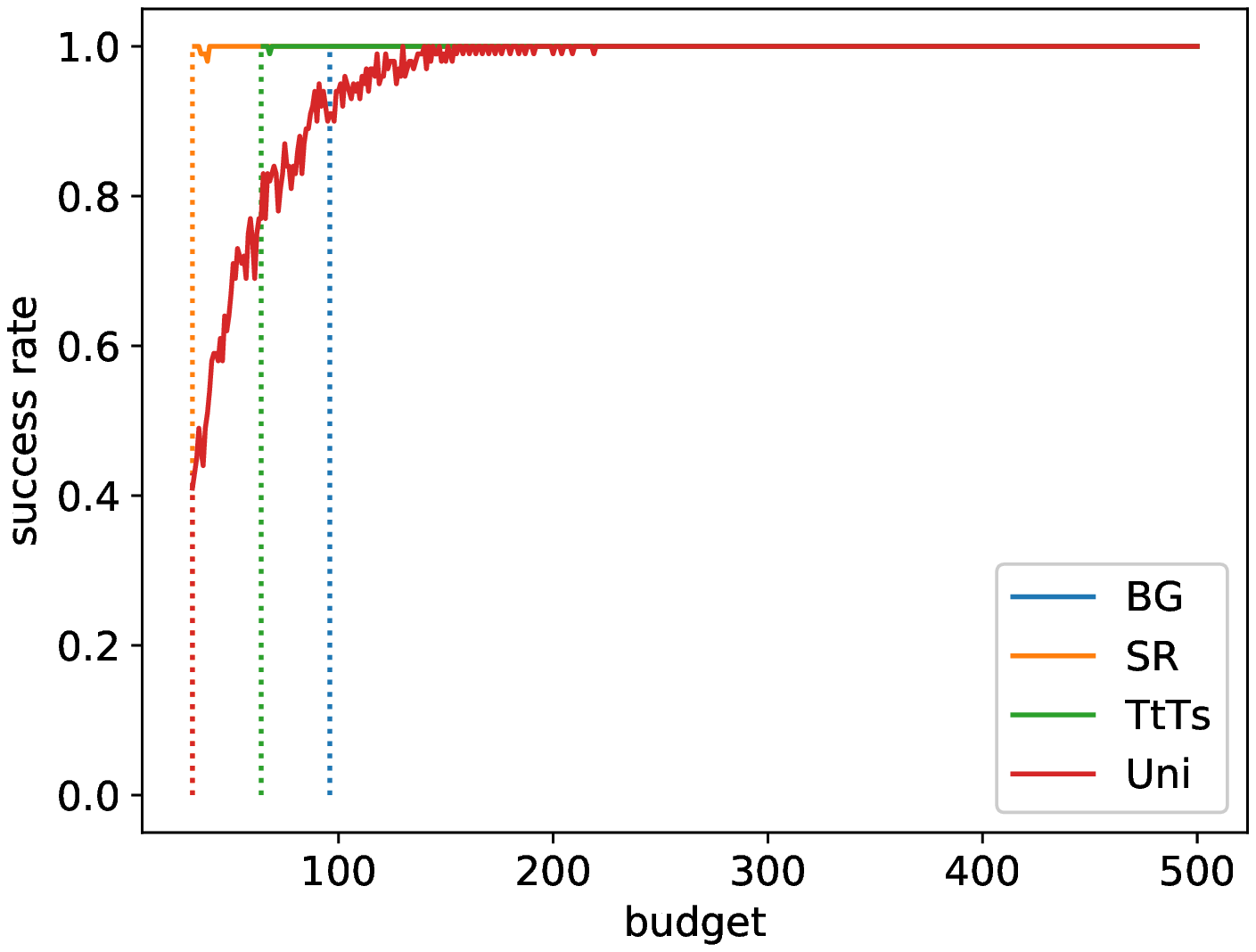} 
   \caption{Bandit run results for $R_{0}=1.6$.}
    \vspace{4ex}
  \end{subfigure} 
  \begin{subfigure}[b]{0.5\linewidth}
    \centering
    \includegraphics[width=0.75\linewidth]{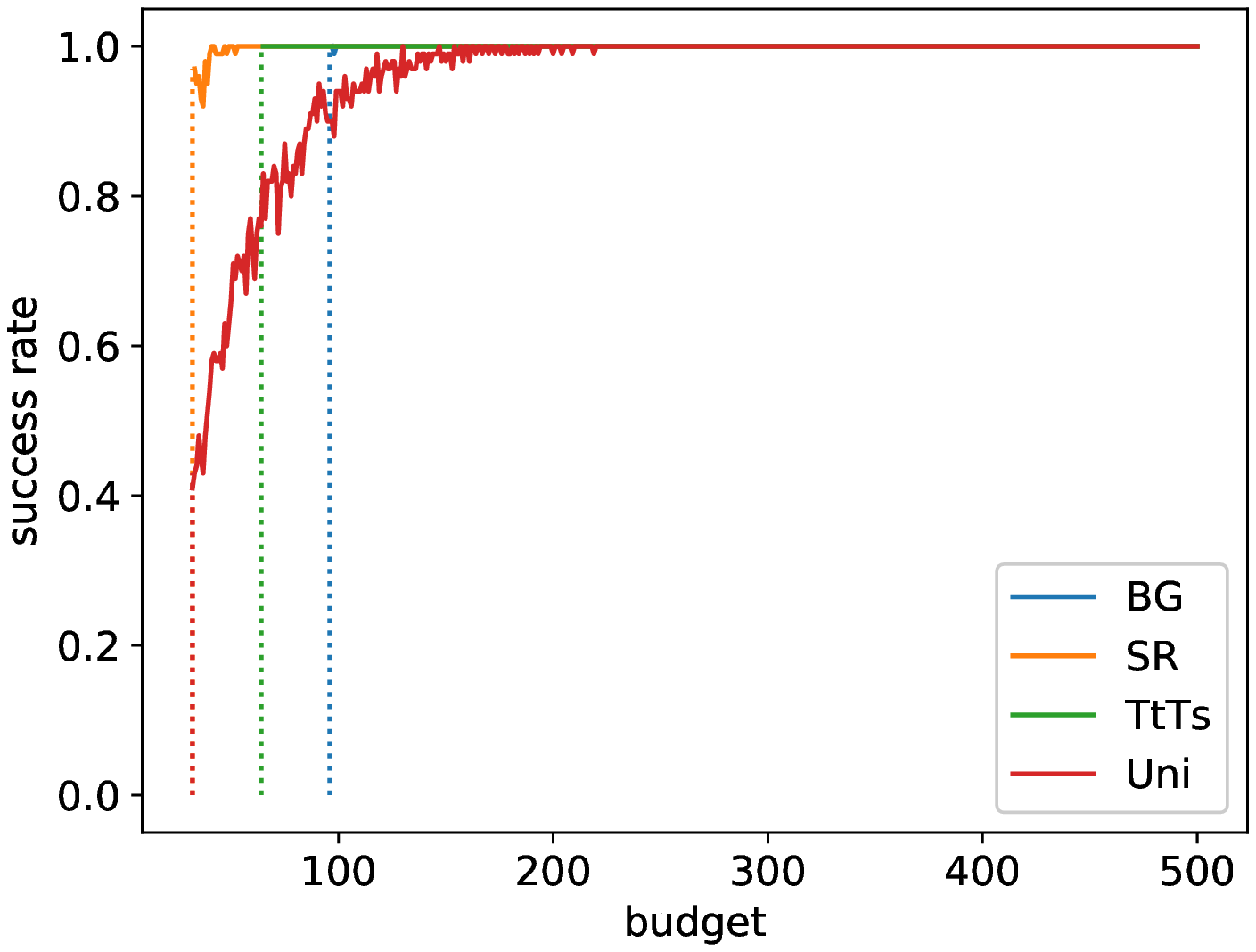} 
    \caption{Bandit run results for $R_{0}=1.8$.}
  \end{subfigure}
  \begin{subfigure}[b]{0.5\linewidth}
    \centering
    \includegraphics[width=0.75\linewidth]{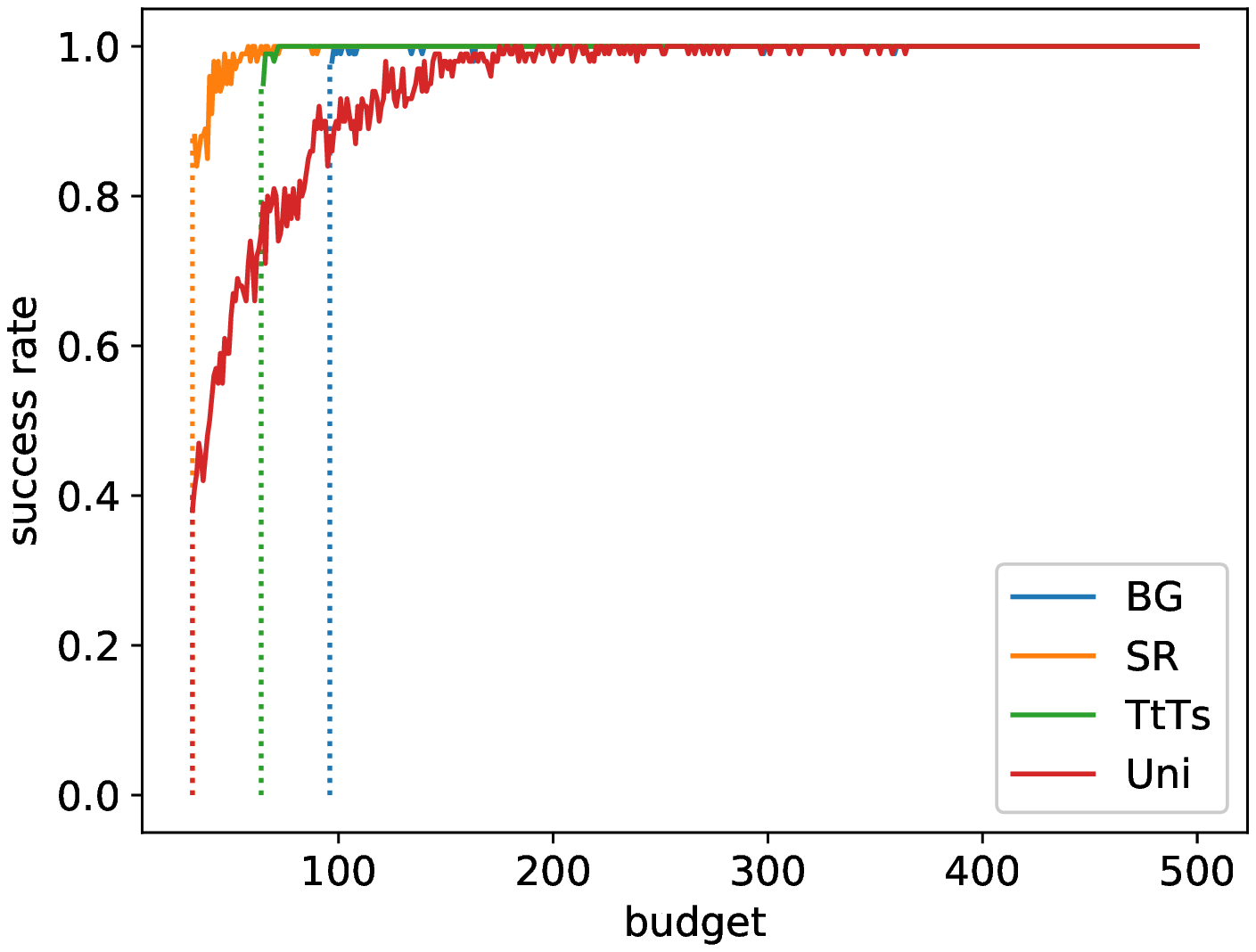}
    \caption{Bandit run results for $R_{0}=2.0$.}
  \end{subfigure}
   \begin{subfigure}[b]{0.5\linewidth}
    \centering
    \includegraphics[width=0.75\linewidth]{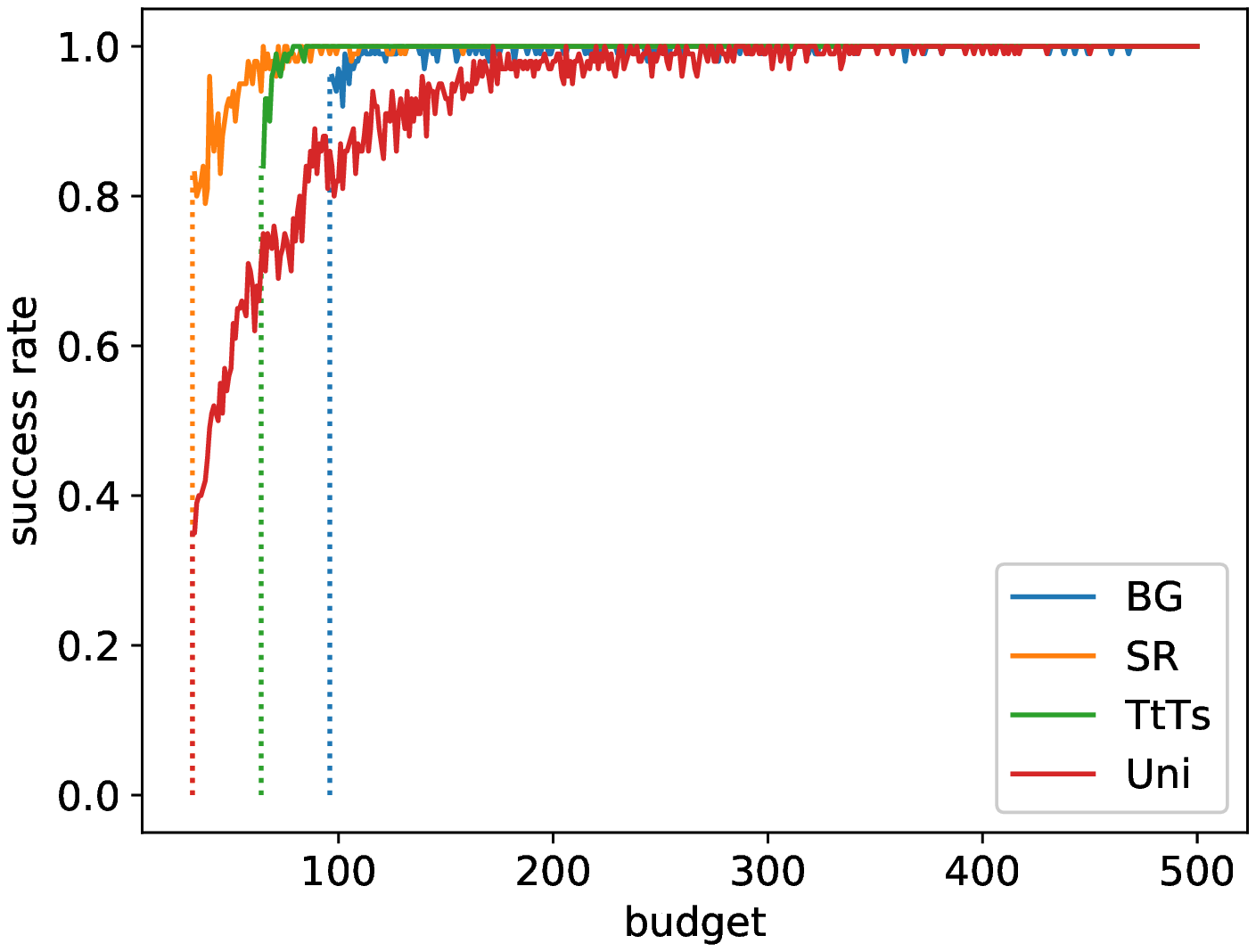}
    \caption{Bandit run results for $R_{0}=2.2$.}
   \end{subfigure}
    \begin{subfigure}[b]{0.5\linewidth}
    \centering
    \includegraphics[width=0.75\linewidth]{bandit_run_24}
    \caption{Bandit run results for $R_{0}=2.4$.}
  \end{subfigure} 
\end{figure}

\newpage
\section{$P_s$ values for Top-two Thompson sampling}
\begin{figure}[!h] 
  \begin{subfigure}[b]{0.5\linewidth}
    \centering
    \includegraphics[width=0.75\linewidth]{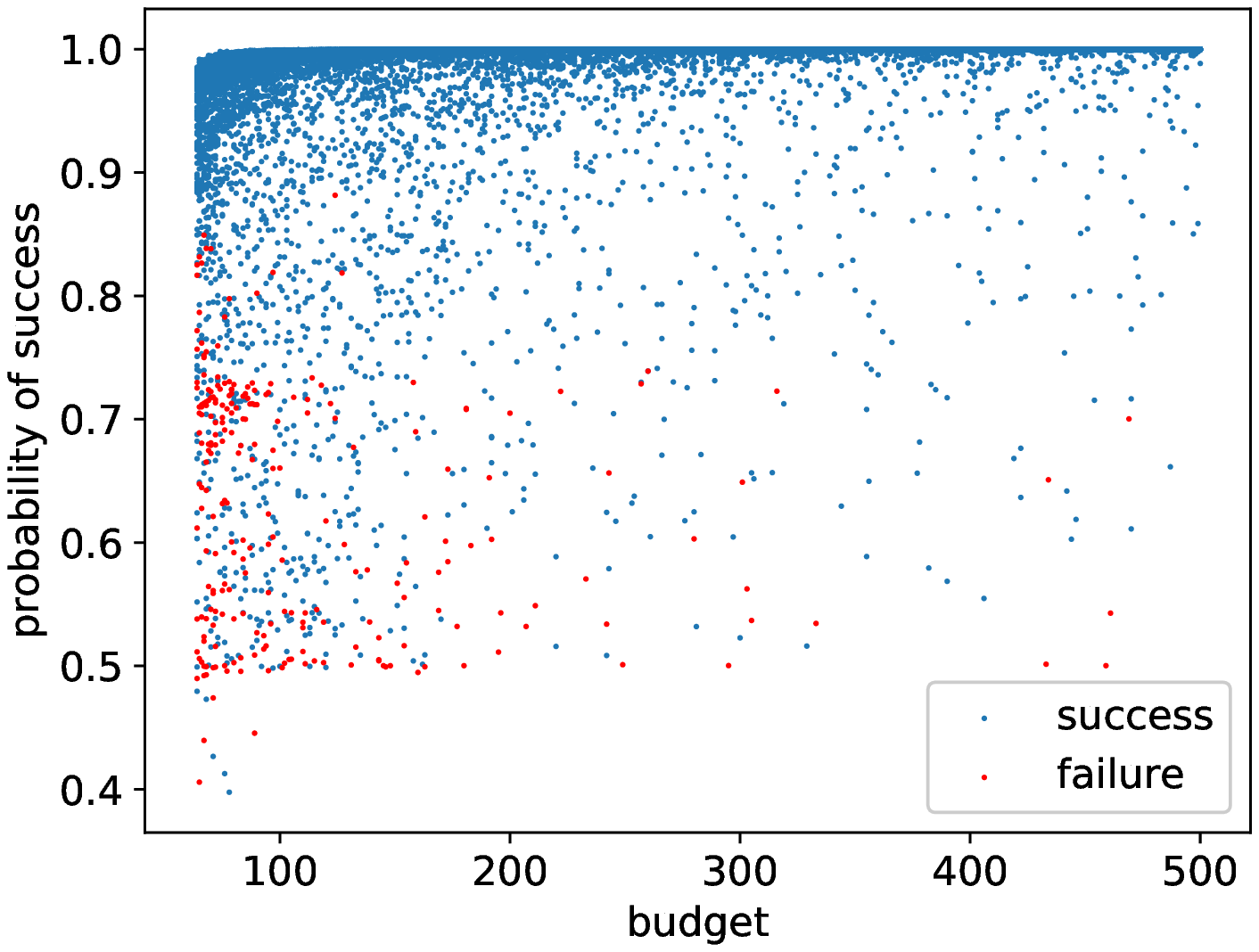} 
    \caption{$P_s$ values for $R_{0}=1.4$.} 
    \vspace{4ex}
  \end{subfigure}
  \begin{subfigure}[b]{0.5\linewidth}
    \centering
    \includegraphics[width=0.75\linewidth]{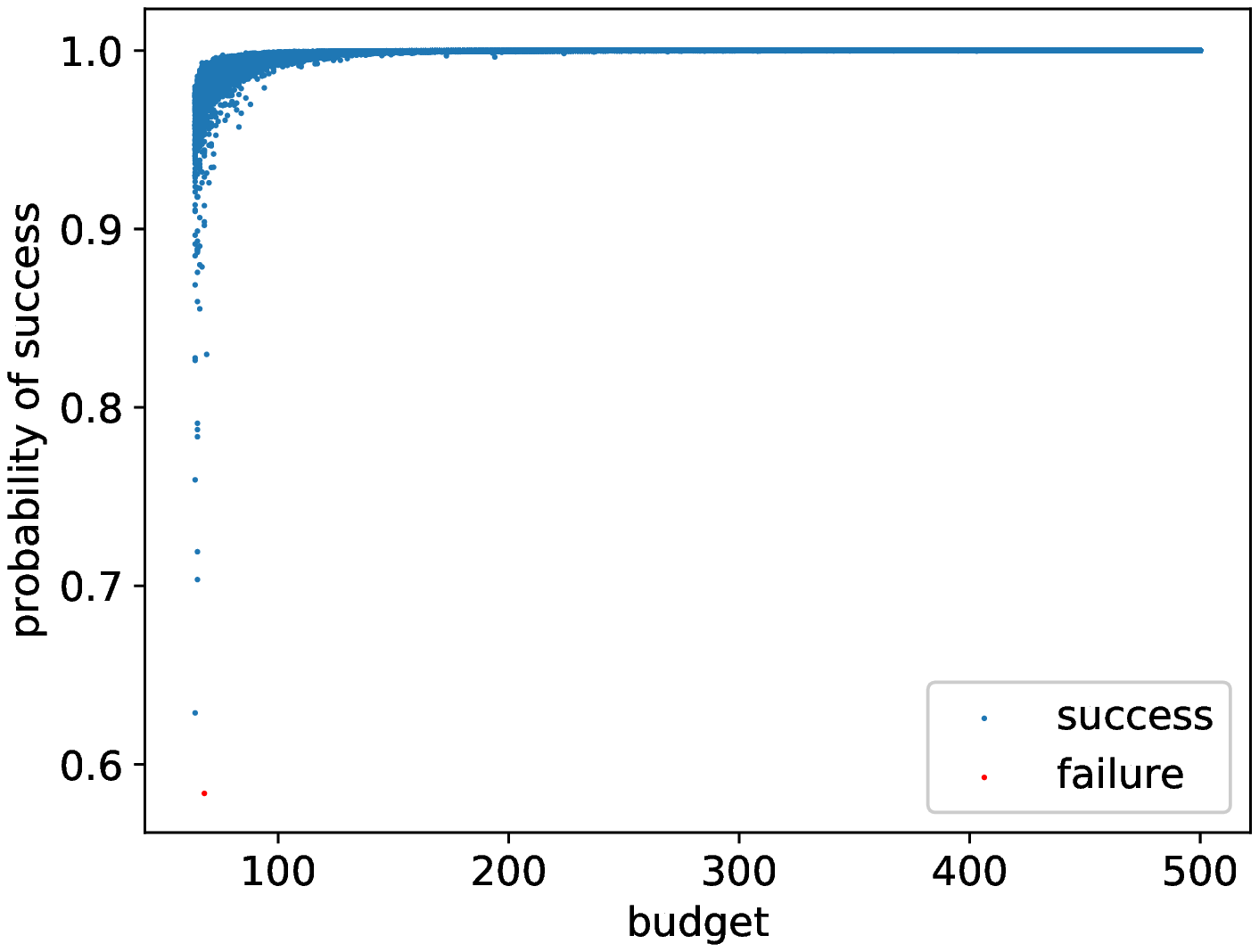} 
   \caption{$P_s$ values for $R_{0}=1.6$.}
    \vspace{4ex}
  \end{subfigure} 
  \begin{subfigure}[b]{0.5\linewidth}
    \centering
    \includegraphics[width=0.75\linewidth]{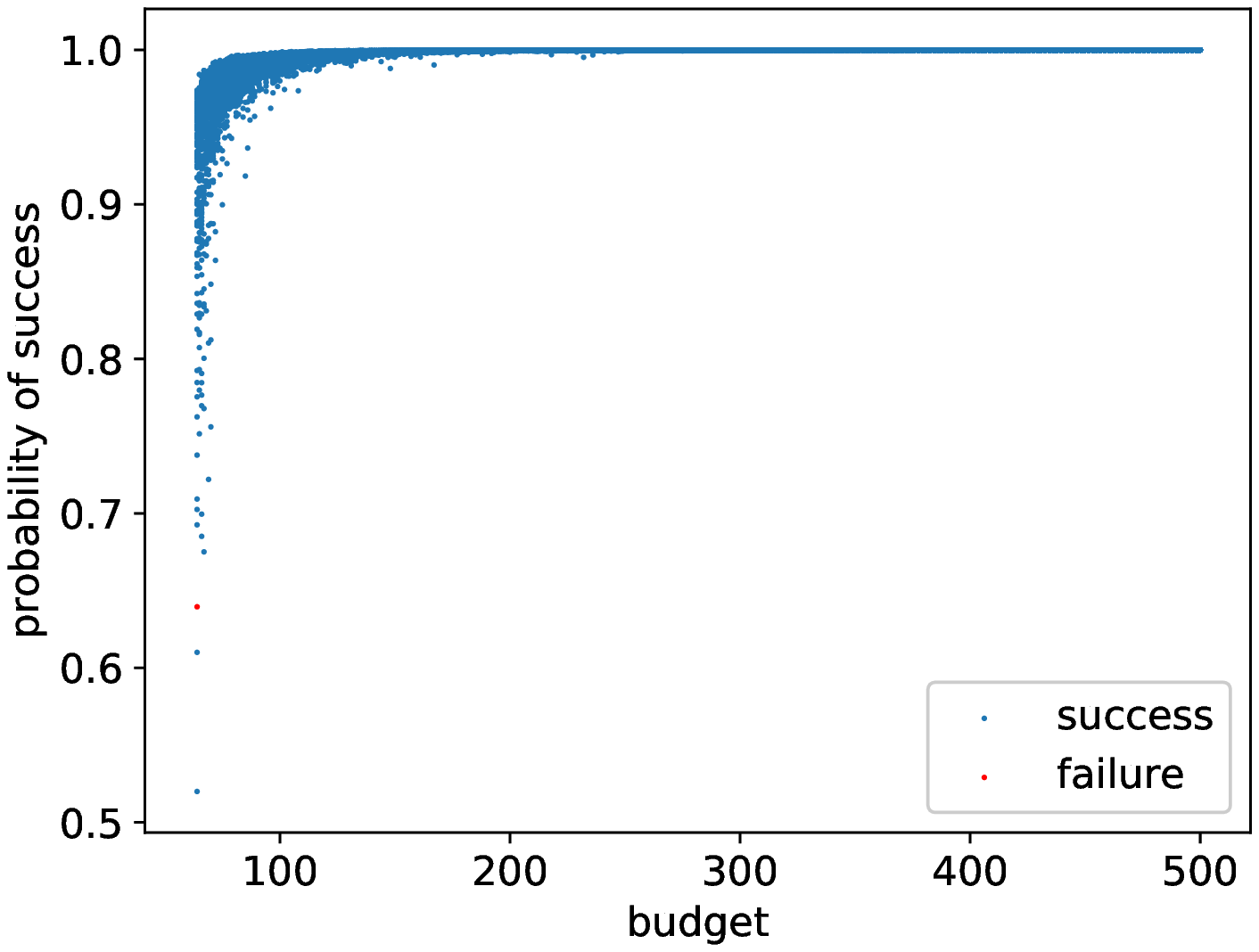} 
    \caption{$P_s$ values for $R_{0}=1.8$.}
  \end{subfigure}
  \begin{subfigure}[b]{0.5\linewidth}
    \centering
    \includegraphics[width=0.75\linewidth]{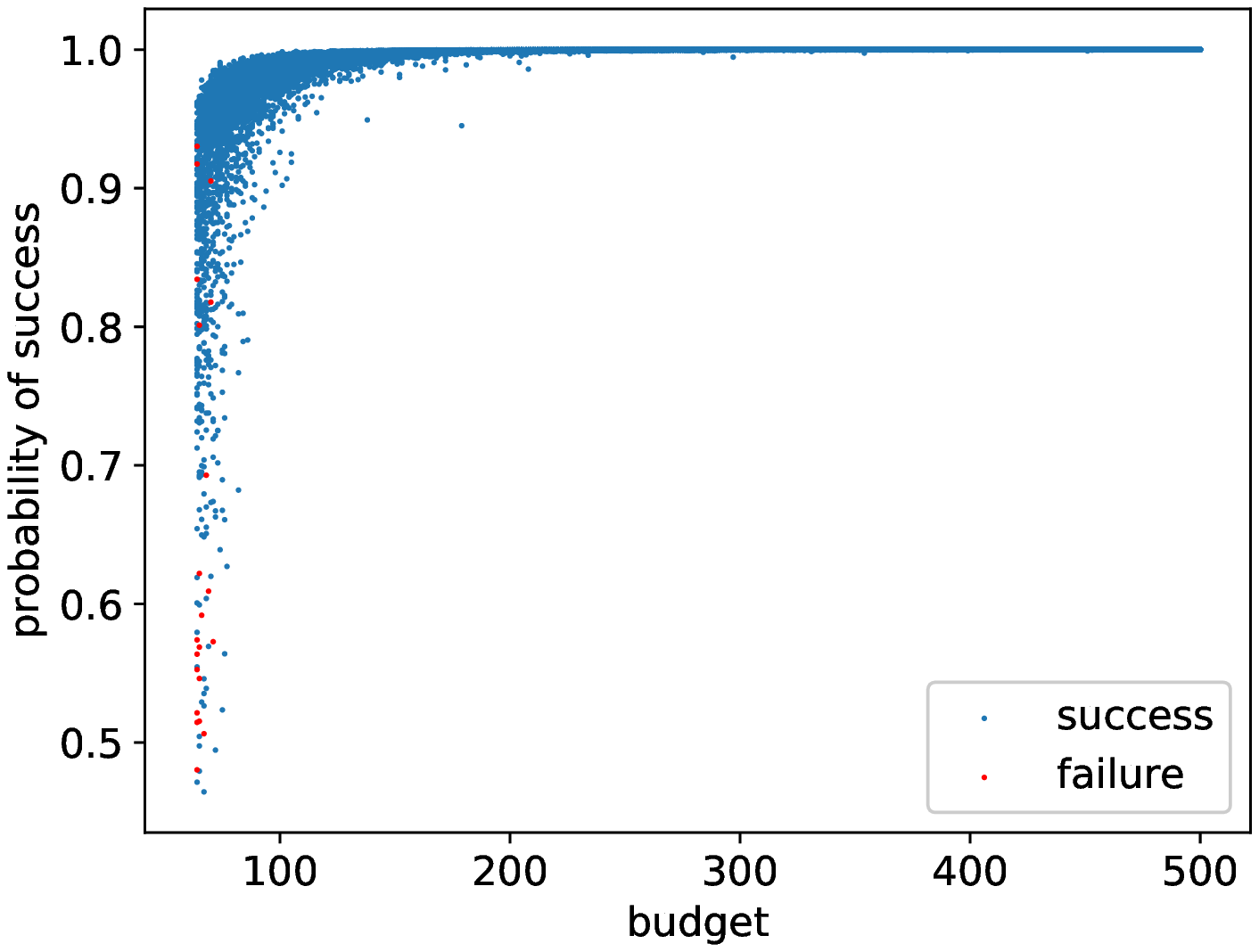}
    \caption{$P_s$ values for $R_{0}=2.0$.}
  \end{subfigure}
   \begin{subfigure}[b]{0.5\linewidth}
    \centering
    \includegraphics[width=0.75\linewidth]{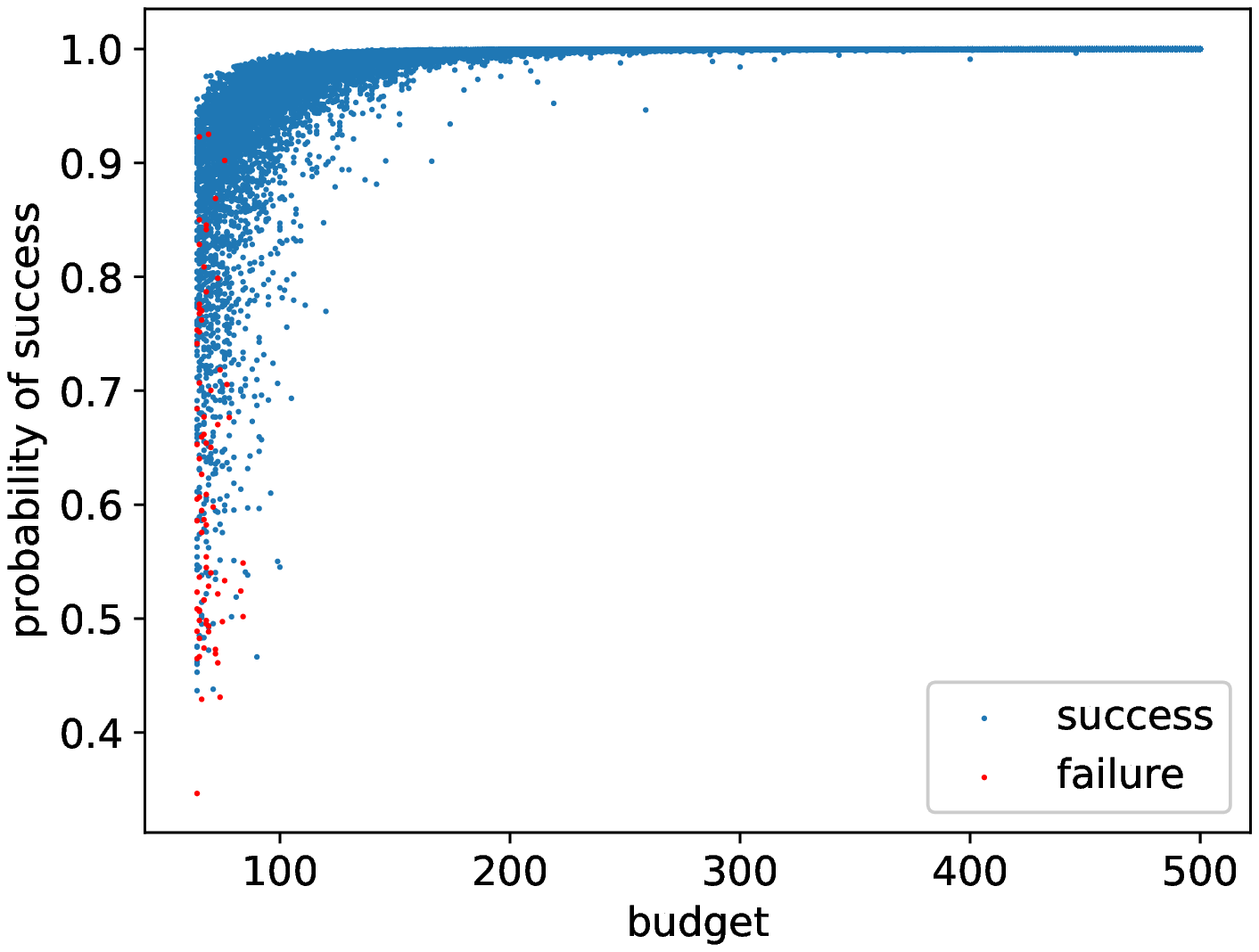}
    \caption{$P_s$ values for $R_{0}=2.2$.}
   \end{subfigure}
    \begin{subfigure}[b]{0.5\linewidth}
    \centering
    \includegraphics[width=0.75\linewidth]{prob_of_success_dist_24}
    \caption{$P_s$ values for $R_{0}=2.4$.}
  \end{subfigure}
\end{figure}

\newpage
\section{Binned distribution of $P_s$ values for Top-two Thompson sampling}
\begin{figure}[!h] 
  \begin{subfigure}[b]{0.5\linewidth}
    \centering
    \includegraphics[width=0.75\linewidth]{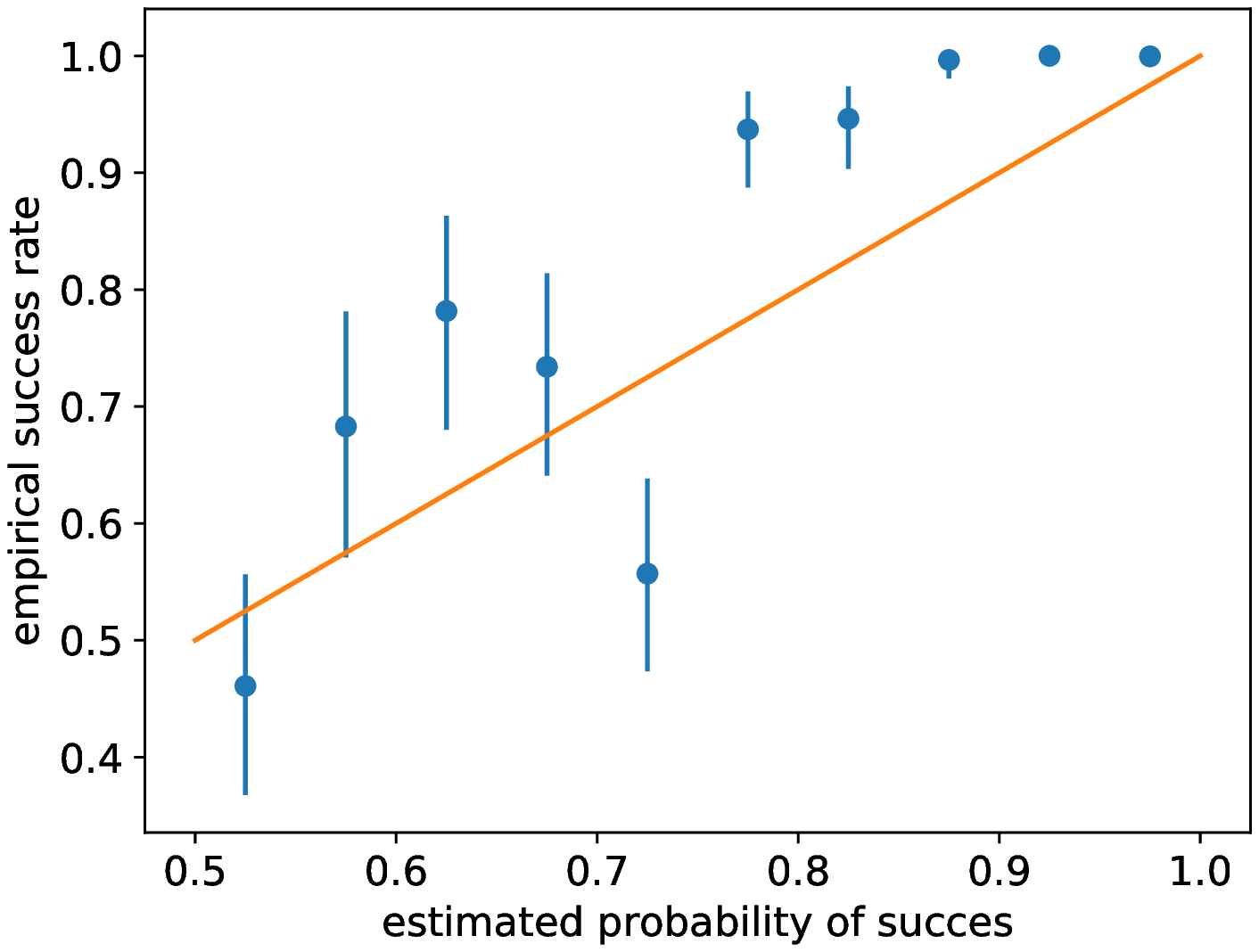} 
    \caption{Binned distribution for $R_{0}=1.4$.} 
    \vspace{4ex}
  \end{subfigure}
  \begin{subfigure}[b]{0.5\linewidth}
    \centering
    \includegraphics[width=0.75\linewidth]{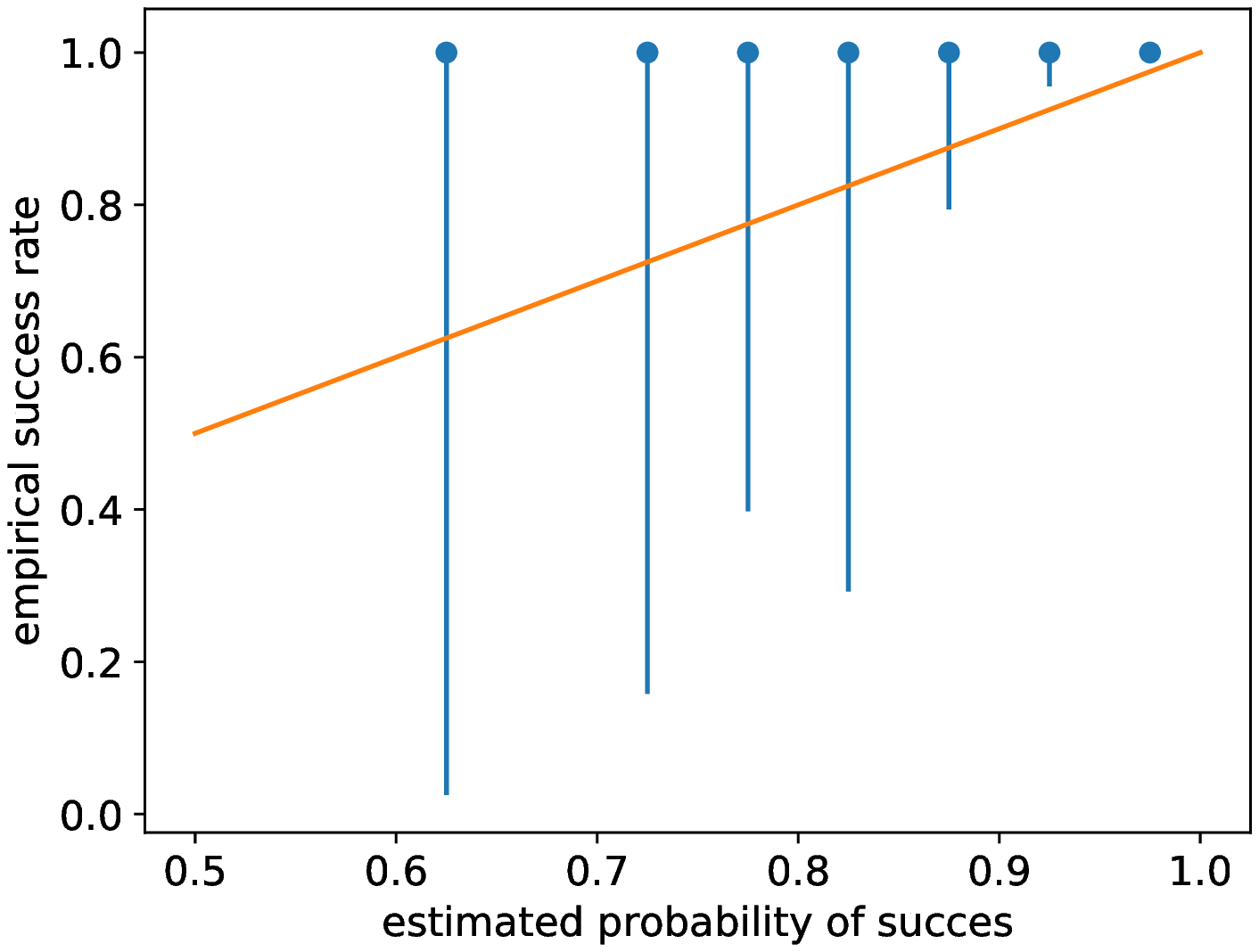} 
   \caption{Binned distribution for $R_{0}=1.6$.}
    \vspace{4ex}
  \end{subfigure} 
  \begin{subfigure}[b]{0.5\linewidth}
    \centering
    \includegraphics[width=0.75\linewidth]{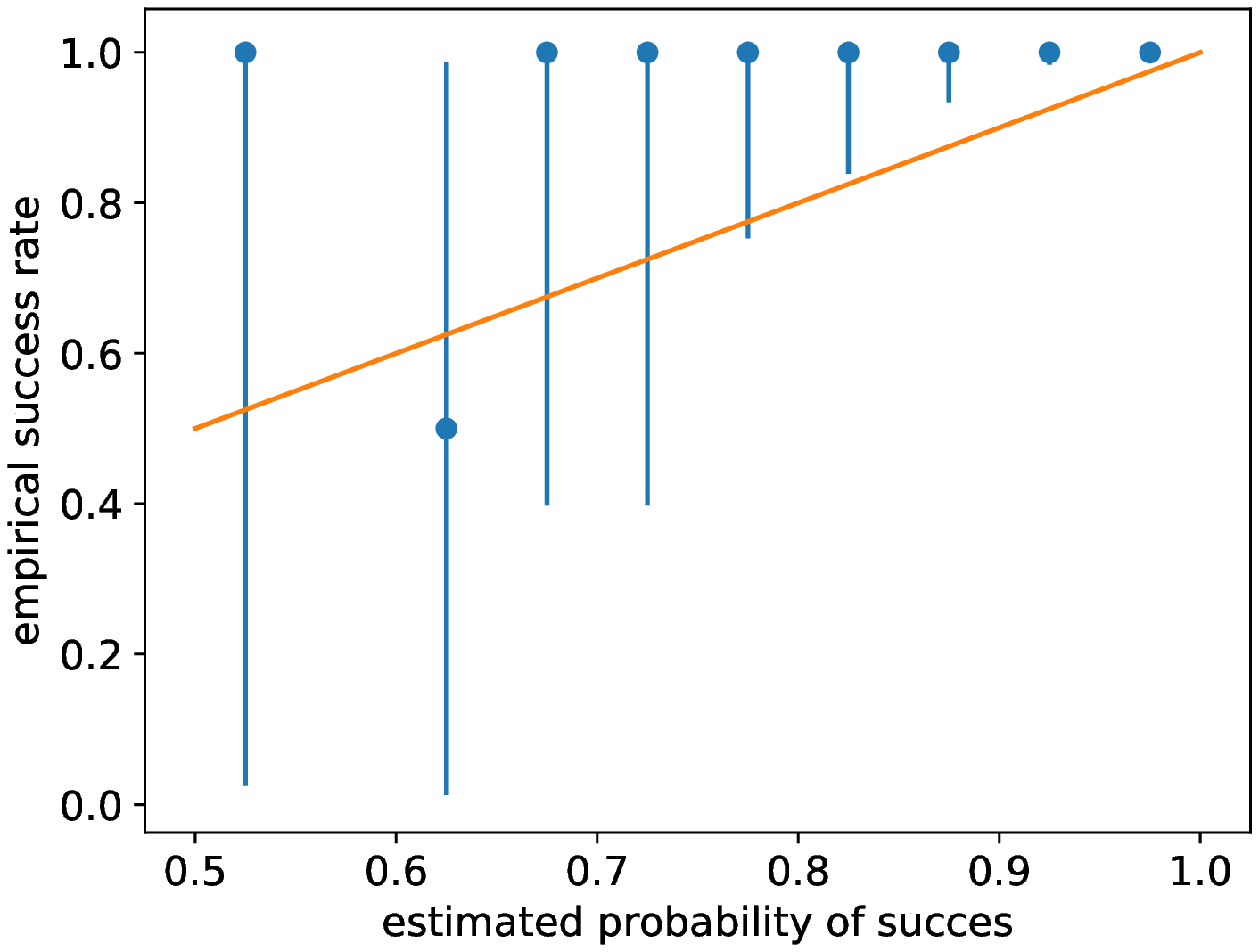} 
    \caption{Binned distribution for $R_{0}=1.8$.}
  \end{subfigure}
  \begin{subfigure}[b]{0.5\linewidth}
    \centering
    \includegraphics[width=0.75\linewidth]{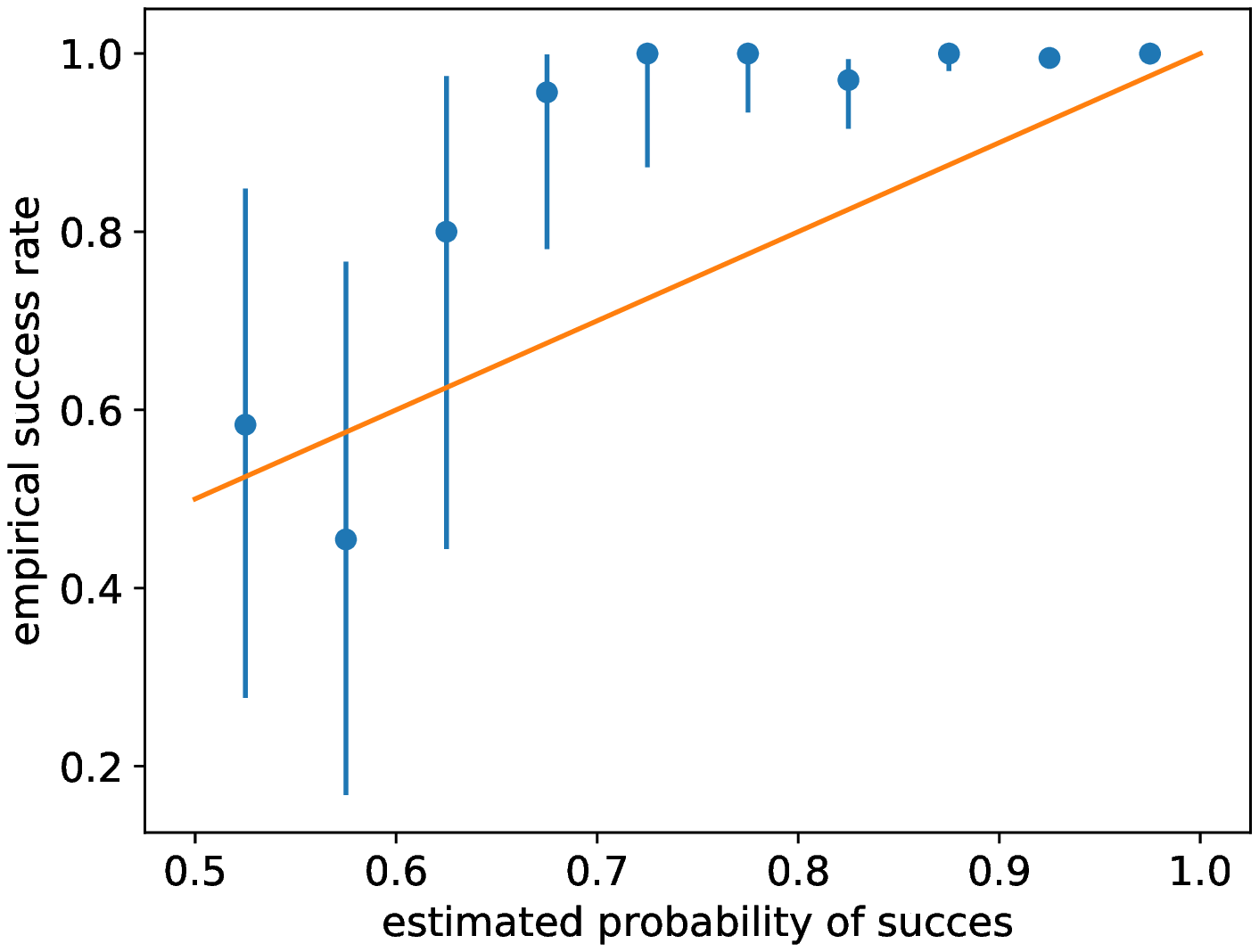}
    \caption{Binned distribution for $R_{0}=2.0$.}
  \end{subfigure}
   \begin{subfigure}[b]{0.5\linewidth}
    \centering
    \includegraphics[width=0.75\linewidth]{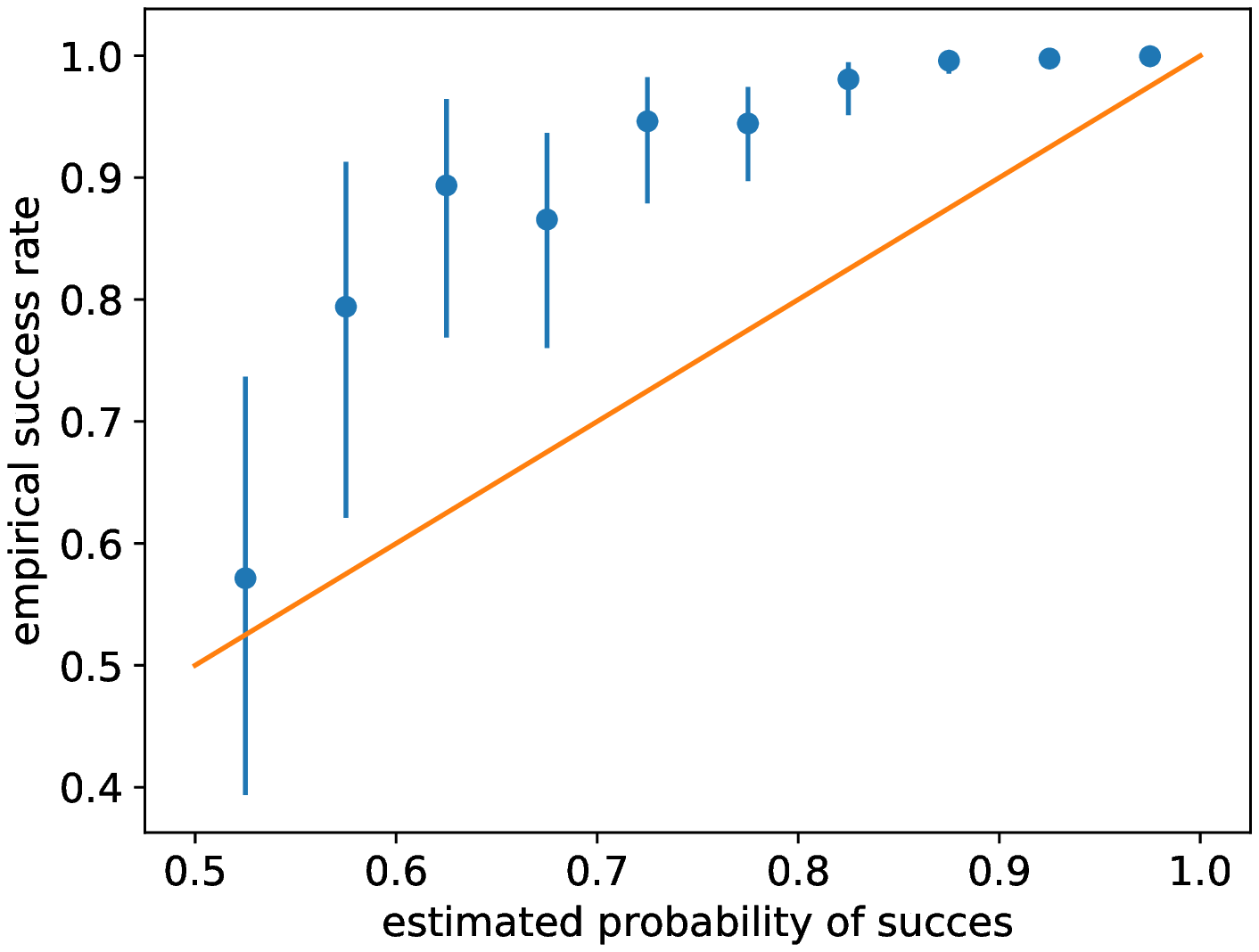}
    \caption{Binned distribution for $R_{0}=2.2$.}
   \end{subfigure}
    \begin{subfigure}[b]{0.5\linewidth}
    \centering
    \includegraphics[width=0.75\linewidth]{binned_prob_of_success_24}
    \caption{Binned distribution for $R_{0}=2.4$.}
  \end{subfigure}
\end{figure}

\newpage
\section{Computational resources}
The simulations were run on a high performance cluster (HPC). On this HPC, we used ``Ivy Bridge'' nodes, more specifically nodes with two 10-core "Ivy Bridge" Xeon E5-2680v2 CPUs (2.8 GHz, 25 MB level 3 cache) and 64 GB of RAM. This infrastructure allowed us to run 20 FluTE simulations per node.

\newpage
\bibliographystyle{plainnat}  
\bibliography{refs_processed}  

\end{document}